\documentclass{article}

\usepackage[preprint]{neurips_2026}

\usepackage[utf8]{inputenc}
\usepackage[T1]{fontenc}
\usepackage{hyperref}
\hypersetup{hidelinks}
\usepackage{url}
\usepackage{booktabs}
\usepackage{amsmath,amsfonts,amssymb}
\usepackage{nicefrac}
\usepackage{microtype}
\usepackage{xcolor}
\usepackage{graphicx}
\usepackage{amsthm}
\usepackage{thmtools}
\usepackage{subcaption}
\usepackage{makecell}
\usepackage{array}
\usepackage{multirow}

\declaretheorem[name=Definition]{definition}
\declaretheorem[style=plain,name=Lemma,sibling=definition]{lemma}
\declaretheorem[style=plain,name=Corollary,sibling=definition]{corollary}
\declaretheorem[style=plain,name=Proposition,sibling=definition]{proposition}

\title{Faithful Embeddings of Irregular and Asynchronous Data for Online Log-NCDEs}
\author{
    Benjamin Walker \\
    Mathematical Institute \\
    University of Oxford
    \And
    Alexandre Bloch \\ 
    Mathematical Institute \\
    University of Oxford
    \And
    Lingyi Yang \\
    School of Mathematical Sciences \\
    University of Nottingham
    \And
    Sam Morley \\
    Mathematical Institute\\
    University of Oxford
    \And
    Terry Lyons \\
    Mathematical Institute, University of Oxford \\
    Department of Mathematics, Imperial College London
    }

\begin{document}

\maketitle

\begin{abstract}
Continuous-time models are a natural choice for irregular and asynchronous data.
A central design choice is how to embed discrete observations into continuous time. 
Interpolation- and imputation-based embeddings reconstruct a continuous observation path, making the model sensitive to the choice of reconstruction. 
We show that this reconstruction step is unnecessary; under mild conditions, compact-set universality on the model input space transfers to the data space whenever the embedding from data to input is continuous and injective.
Guided by this result, and building on the rectilinear control path for Neural Controlled Differential Equations (NCDEs), we introduce a continuous and injective embedding for Log-NCDEs, a universal class of continuous-time models. 
Our approach records observations as increments and composes them over arbitrary query intervals to directly form log-signatures.
This provides interval-level summaries without first interpolating the observed variables, while supporting online computation.
Experiments on synthetic controlled dynamics and real-world time-series datasets show that the representation is accurate, efficient, and robust to irregular, asynchronous, and sparse observations.
\end{abstract}

\section{Introduction}

\begin{figure}[t]
    \centering
    \includegraphics[width=1.0\textwidth]{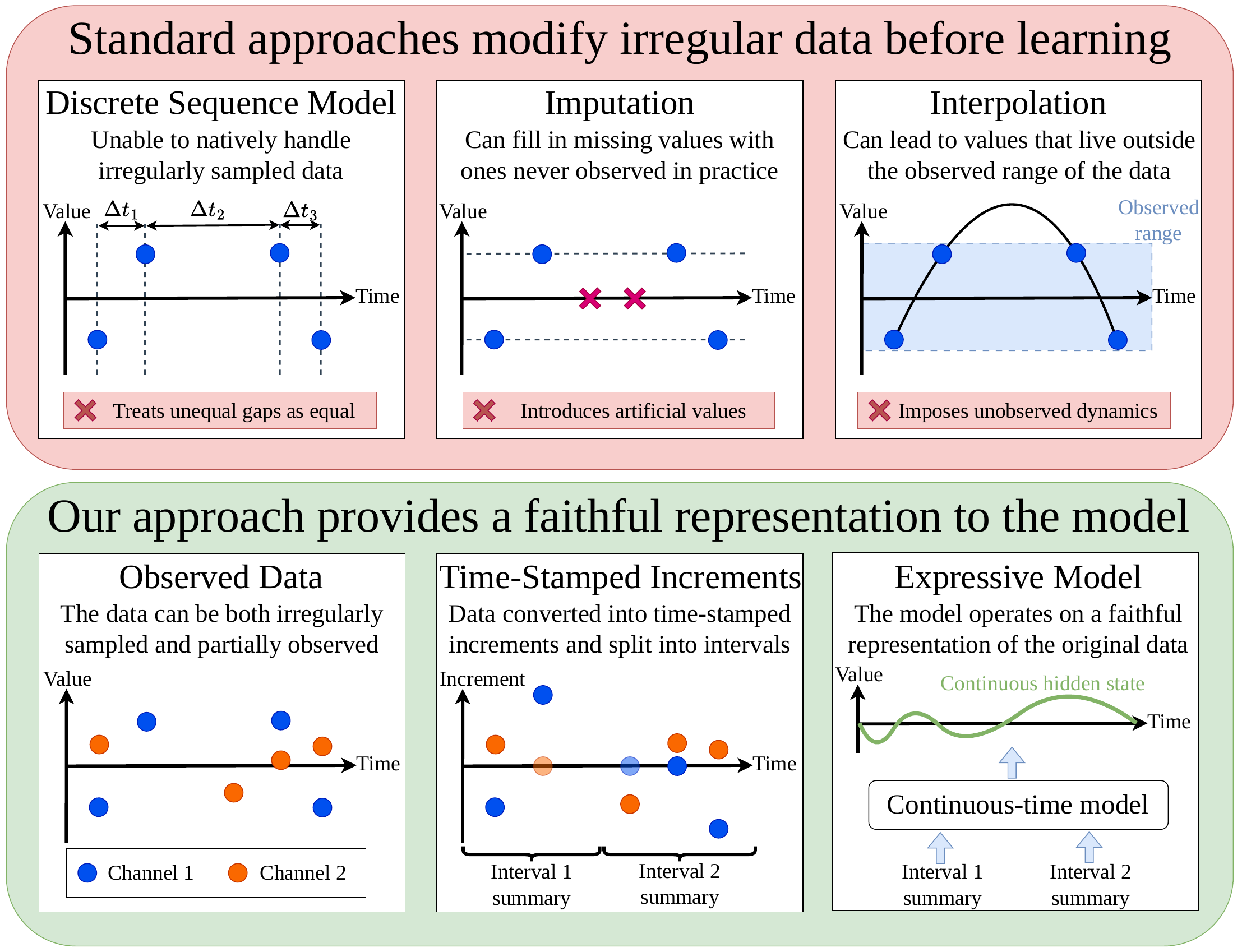}
    \caption{\textbf{Faithful representations of irregular data.}
    Imputation and interpolation complete the data before modelling, introducing values or dynamics that may be unrealistic. 
    Our approach records the observed stream as time-stamped increments and groups them into interval summaries, giving a faithful (continuous and injective) representation of the discrete and irregular data.}
    \label{fig:whynotinterpolate}
\end{figure}

Sequence architectures, such as Transformers \citep{vaswani2017attention}, Structured State-Space Models (SSMs) \citep{gu2021efficiently, gu2024mamba}, and Recurrent Neural Networks (RNNs), are built around discrete updates on a prescribed grid \citep{ elman1990finding, hochreiter1997lstm}. 
This discretisation is natural for text and regularly sampled time series, but is limiting in interactive and physical settings, where observations arrive irregularly, actions are taken asynchronously, and the underlying environment continues to evolve between measurements \citep{marvasti2001nonuniform, heemels2012introduction}. 
Continuous-time machine learning provides a natural framework for such data \citep{pearlmutter1989learning, martinez1992discrete}.
Given a continuous-time embedding of the observations, the hidden state can evolve continuously and be queried at arbitrary times \citep{kidger2020neuralcde}. 

A central design choice is how to construct this embedding. 
One common strategy is to complete the data by imputation, interpolation, or learned interpolation before applying a model \citep{che2018recurrent, shukla2019interpolation, shukla2021multi, challu2023nhits, wang2025deep}. 
Neural Controlled Differential Equations (NCDEs) and ContiFormer are prominent examples of this approach \citep{kidger2020neuralcde, morrill2022interpolation, jhin2022exit, chen2023contiformer}.
Other models avoid explicit interpolation by combining continuous latent dynamics with discrete updates, as in ODE-RNNs and GRU-ODE-Bayes \citep{ODERNN, GRU-ODE}.
These approaches differ in how they represent discrete, irregular, and asynchronous data in continuous time.

The central principle of this paper is that a continuous-time representation of discrete observations should preserve only the information that was observed, rather than guess the unobserved behaviour between measurements.
The latter should form part of the modelling as opposed to part of the representation.
When guesses are built into the representation, the model may consume unrealistic inputs, as illustrated in Figure~\ref{fig:whynotinterpolate}.
\citet{morrill2022interpolation} studied control-path embeddings for NCDEs, exploring properties such as online, smooth, bounded, and unique, or equivalently injective.
For Log-NCDEs, smoothness is no longer intrinsic, since the model uses log-signatures on intervals rather than pointwise values of a control path \citep{Walker2024LogNCDE}.
We prove that, when the data and model input spaces are Hausdorff, universality of the model on compact subsets of its input space transfers to the data space as long as the data embedding is continuous and injective.
We call such embeddings \textit{faithful}.
Since NCDEs and Log-NCDEs are universal on path space \citep{kidger2020neuralcde, Walker2024LogNCDE}, this result applies to both model classes for faithful observation embeddings.

This leads to three desiderata for a Log-NCDE control-path embedding. 
It should be faithful, computable online, and support efficient calculation of the embedded path’s log-signature over arbitrary query intervals.
We construct such an embedding by mapping each observation to a local log-signature contribution that records which channels were observed and their values.
These contributions compose over arbitrary query intervals, so the representation can be built directly from the observations in each interval.
Continuously observed variables, such as time, contribute between observations.
Figure~\ref{fig:whynotinterpolate} illustrates our approach.

This yields a faithful representation of the observed stream without interpolation, decouples the input sampling times from the output grid, and supports parallel-in-time computation.
The resulting algebraic formulation can also accommodate observations beyond first-order channel values, such as the signed area.
In the first-order case with time as the only continuously observed variable, the continuous path corresponding to our embedding is the rectilinear control path introduced for NCDEs by \citet{morrill2022interpolation}. 
Our construction can therefore be viewed as a Log-NCDE-compatible generalisation of the rectilinear scheme.


Our contributions are as follows:
\begin{itemize}
    \item We formalise the notion of a faithful embedding and prove that faithful embeddings transfer universality from compact subsets of one Hausdorff space to another in Section~\ref{sec:theory}.
    \item We introduce a faithful control-path embedding for Log-NCDEs whose interval log-signatures can be computed efficiently over arbitrary intervals in Section~\ref{sec:procedure}.
    \item We demonstrate the computational benefits and robustness to irregularity of the resulting models on synthetic controlled systems and real-world time-series classification in Section~\ref{sec:experiments}.
\end{itemize}

\section{Faithful Embeddings of Irregular Time Series for Log-NCDEs}
\label{sec:method}

\subsection{Faithful Embeddings}
\label{sec:theory}

Before modelling, data must be embedded into the model input space.
We call such an embedding \textit{faithful} if it is continuous and injective, thus ensuring that nearby data remain nearby after embedding and that distinct data remain distinguishable.
Together, these conditions are sufficient for universality to transfer from the downstream model to the original data space, as formalised by the following lemma.

\begin{lemma}[Universality Transfer]
\label{lem:transfer}
Let $\mathcal{X}$ and $\mathcal{P}$ be Hausdorff topological spaces, and let $\varphi : \mathcal{X} \to \mathcal{P}$ be a continuous injection.
Suppose $\mathcal{F} \subseteq C(\mathcal{P}; \mathbb{R})$ is a family of continuous real-valued functions such that for every compact $K \subseteq \mathcal{P}$, the restrictions $\mathcal{F}|_K$ are dense in $C(K; \mathbb{R})$ in the uniform topology.
Then for every compact $K_0 \subseteq \mathcal{X}$, the restrictions $(\mathcal{F} \circ \varphi)|_{K_0}$ are dense in $C(K_0; \mathbb{R})$ in the uniform topology, where $\mathcal{F} \circ \varphi := \{ f \circ \varphi : f \in \mathcal{F} \}$.
\end{lemma}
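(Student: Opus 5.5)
The plan is to reduce the statement to the density hypothesis on the single compact set $K := \varphi(K_0) \subseteq \mathcal{P}$. Since $\varphi$ is continuous, $K$ is compact, so by assumption $\mathcal{F}|_K$ is dense in $C(K;\mathbb{R})$. The key structural fact is that the restriction $\psi := \varphi|_{K_0} : K_0 \to K$ is a homeomorphism. It is a continuous bijection onto its image, because $\varphi$ is injective. It is also a closed map: a closed subset $C \subseteq K_0$ is compact (closed in compact), so $\psi(C)$ is compact, hence closed in $K$ because $K$ is Hausdorff as a subspace of the Hausdorff space $\mathcal{P}$. This is exactly where the Hausdorff assumption on $\mathcal{P}$ enters. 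The Hausdorff assumption on $\mathcal{X}$ is not strictly needed for this step, but it guarantees $K_0$ is closed and well-behaved, so it will simply be noted.

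With $\psi$ a homeomorphism, fix $g \in C(K_0;\mathbb{R})$ and $\varepsilon > 0$. Define $h := g \circ \psi^{-1} \in C(K;\mathbb{R})$; this function is continuous precisely because $\psi^{-1}$ is continuous. By the density hypothesis, there is $f \in \mathcal{F}$ with $\sup_{p \in K} |f(p) - h(p)| < \varepsilon$. Then for every $x \in K_0$, writing $p = \varphi(x) \in K$,
\[
|f(\varphi(x)) - g(x)| = |f(p) - h(p)| < \varepsilon ,
\]
so $\sup_{x\in K_0}|(f\circ\varphi)(x) - g(x)| < \varepsilon$. This gives density of $(\mathcal{F}\circ\varphi)|_{K_0}$ in the uniform topology. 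Continuity of each $f\circ\varphi$ is immediate, so these restrictions do lie in $C(K_0;\mathbb{R})$.

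The only genuine obstacle is continuity of $\psi^{-1}$. A continuous injection need not have a continuous inverse in general, and without it $g\circ\psi^{-1}$ could fail to be continuous on $K$, so the hypothesis on $\mathcal{F}$ would not apply. I would handle this with the standard compact-to-Hausdorff argument described above, writing it out explicitly rather than citing it, since it is the crux of why faithfulness suffices. Everything else is routine bookkeeping of sup norms.
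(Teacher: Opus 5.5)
Your proposal is correct and follows essentially the same route as the paper: restrict $\varphi$ to $K_0$ and use the compact-to-Hausdorff argument to get a homeomorphism onto $K=\varphi(K_0)$. You then pull the target function back through the inverse and approximate it on $K$, exactly as the paper does; your explicit closed-map argument and your remark that the Hausdorff assumption on $\mathcal{X}$ is not actually needed are both accurate refinements of the paper's one-line appeal to that standard fact.
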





The proof is in Appendix \ref{app:faithful-embeddings}. With $\mathcal{X}$ the data space, $\mathcal{P}$ the model input space, and $\varphi$ the embedding, Lemma \ref{lem:transfer} shows that faithful embeddings transfer universality back to the original data. This recovers the usual signature-universality argument: Stone--Weierstrass gives universality on signature space, and injectivity transfers it back to paths \citep{StoneWeierstrass, levin2016learningpastpredictingstatistics}. Appendix \ref{app:faithful-embeddings} also notes the obvious converse obstruction: if $\varphi$ identifies two data objects, no downstream model can separate them.
Appendix~\ref{app:limitations} discusses the scope of this universality statement and its distinction from sample efficiency or optimisation guarantees.

We now construct a faithful embedding of an irregular observation stream as a path.
The construction is tailored to Log-NCDEs, which require interval log-signatures rather than pointwise values of an interpolated path.
It therefore computes the required log-signatures directly from the observations, without interpolating the data.
The procedure follows the signature-computation viewpoint of \citet{morley2024roughpy}.
We begin with an introduction to the tensor algebra, signatures, and log-signatures.

\subsection{Tensor algebra, signatures, and log-signatures}

Let $X:[0,T]\to\mathbb{R}^{d_X}$ be a path of bounded variation, with coordinates $X=(X^1,\ldots,X^{d_X})$.
The signature of $X$ over $[s,t]$ records all iterated integrals of its coordinate increments,
\begin{equation}
    S_{s,t}(X)^I
    =
    \int_{s<u_1<\cdots<u_k<t}
    \mathrm{d}X^{i_1}_{u_1}\cdots \mathrm{d}X^{i_k}_{u_k},
\end{equation}
where $I=i_1\cdots i_k$ is a word of length $|I|=k$ with letters in $\{1,\ldots,d_X\}$  \citep{lyons2007differential, friz2010multidimensional}.
The empty word is denoted by $\emptyset$, with $S_{s,t}(X)^\emptyset=1$.
The $k^{\text{th}}$ level consists of all signature coordinates $S_{s,t}(X)^I$ indexed by words $I$ of length $k$.
The collection of all levels forms an element of the tensor algebra $T((\mathbb{R}^{d_X}))$, equipped with the product
\begin{equation}
    (\mathbf{a}\otimes\mathbf{b})^I
    =
    \sum_{I=JK}\mathbf{a}^J\mathbf{b}^K,
\end{equation}
where the sum is over all decompositions of $I$ into two words $J$ and $K$.
The unit is $\mathbf{1}$, with $\mathbf{1}^{\emptyset}=1$ and $\mathbf{1}^I=0$ for $|I|\geq1$.
Chen's identity \citep{Chen1954Iterated} states that, for $s\leq u\leq t$,
\begin{equation}
    S_{s,t}(X)
    =
    S_{s,u}(X)\otimes S_{u,t}(X),
\end{equation}
so this product is the algebraic counterpart of concatenating intervals \citep{SigPrimer,lyons2025signaturemethodsmachinelearning}. 

The tensor product is non-commutative, giving the Lie bracket $[\mathbf{a},\mathbf{b}] = \mathbf{a}\otimes\mathbf{b}-\mathbf{b}\otimes\mathbf{a}$. The free Lie algebra $\mathfrak{L}((\mathbb{R}^{d_X}))$ is generated by degree-one tensors under this bracket \citep{reutenauer1993free}.
The log-signature is defined
\begin{equation}
    \log S_{s,t}(X)
    =
    \sum_{m=1}^{\infty}
    \frac{(-1)^{m+1}}{m}
    \left(S_{s,t}(X)-\mathbf{1}\right)^{\otimes m}
\end{equation}
and lives in $\mathfrak{L}((\mathbb{R}^{d_X}))$ as signatures of bounded variation paths are group-like elements of $T((\mathbb{R}^{d_x}))$ \citep{Chen1957IntegrationOP}. 
Although it contains the same information as the signature, each level is represented in the lower-dimensional free Lie algebra rather than in the full tensor algebra.
The depth-$N$ truncated tensor algebra $T^N(\mathbb{R}^{d_X})$ keeps only levels with degree at most $N$, and $S^N_{s,t}(X)$ and $\log S^N_{s,t}(X)$ are the projections of $S_{s,t}(X)$ and $\log S_{s,t}(X)$ onto these levels.
We now introduce Log-NCDEs, which interact with the driving path on an interval through the truncated log-signature.

\subsection{Neural controlled differential equations}

An NCDE models a continuous-time hidden state $h_t\in\mathbb{R}^{d_h}$ driven by a path $X$ through a parametrised vector field $f_\theta:\mathbb{R}^{d_h}\to\mathbb{R}^{d_h\times d_X}$ \citep{kidger2020neuralcde},
\begin{equation}
    \mathrm{d}h_t = f_\theta(h_t)\,\mathrm{d}X_t.
\end{equation}
For $a\in\mathbb{R}^{d_X}$, $f_\theta(\cdot)a:\mathbb{R}^{d_h}\to\mathbb{R}^{d_h}$ denotes the vector field obtained by multiplying the matrix output of $f_\theta$ by $a$.
The vector field may be nonlinear, or may act linearly on the hidden state as in a Linear NCDE, $f_\theta(h_t)\mathrm{d}X_t=\sum_{i=1}^{d_X} A^i_\theta h_t\mathrm{d}X^i_t$ \citep{cirone2024deepSSM}. 
Predictions are produced from the hidden state by a readout map $l_\psi$. 
With a linear readout of the terminal hidden state, both nonlinear and Linear NCDEs are universal for continuous path-to-point maps \citep{kidger2020neuralcde}.
With a nonlinear readout of the hidden path, they are universal for continuous online path-to-path maps \citep{cirone2024deepSSM}.
Structured Linear CDEs (SLiCEs) restrict the matrices $A^i_\theta$ to structured matrix families for efficiency.
Several structures, including block-diagonal, diagonal-plus-low-rank, sparse, and the exponentially weighted signature retain the same path-to-point and path-to-path universality guarantees as dense Linear NCDEs \citep{walker2025structuredlinearcdesmaximally, movahedi2025fixedpointrnnsdiagonaldense, bloch2026exponentiallyweightedsignature}.

The Log-ODE method approximates CDE dynamics on an interval by constructing an autonomous ODE from the truncated log-signature of the driving path and the iterated Lie brackets of the vector fields \citep{CASTELL199513}.
A Log-NCDE applies this approximation to NCDEs, so that the hidden-state evolution depends on the driving path only through its log-signature over intervals.
Let $\bar f_\theta$ be the Lie algebra homomorphism extending $f_\theta$ from degree-one tensors to $\mathfrak{L}^N(\mathbb{R}^{d_X})$, so that $\bar f_\theta(\cdot)a = f_\theta(\cdot)a$ for $a\in\mathbb{R}^{d_X}$ and recursively
\begin{equation}
    \bar f_\theta(\cdot)[a,b]
    =
    [\bar f_\theta(\cdot)a,\bar f_\theta(\cdot)b]
    =
    J_{\bar f_\theta(\cdot)b}\bar f_\theta(\cdot)a
    -
    J_{\bar f_\theta(\cdot)a}\bar f_\theta(\cdot)b,
\end{equation}
where $J_{\bar f_\theta(\cdot)b}$ denotes the Jacobian of the vector field $\bar f_\theta(\cdot)b:\mathbb{R}^{d_h}\to\mathbb{R}^{d_h}$ and $f_\theta$ needs to be $\mathrm{Lip}(\gamma)$ for $\gamma \geq N$ \citep{Walker2024LogNCDE}.
On an interval $[r_i,r_{i+1}]$, the Log-ODE update solves 
\begin{equation}
    \frac{\mathrm{d}\bar h_s}{\mathrm{d}s}
    =
    \bar f_\theta(\bar h_s)
    \frac{\log S^N_{r_i,r_{i+1}}(X)}{r_{i+1}-r_i},
    \qquad s\in[r_i,r_{i+1}],
    \qquad \bar h_{r_i}=h_{r_i},
\end{equation}
and sets $h_{r_{i+1}}\approx \bar h_{r_{i+1}}$.

For Linear NCDEs and SLiCEs, the interval flow has the closed-form $\exp\left(\bar A_\theta\left(\log S^N_{s,t}(X)\right)\right)$, where $\bar A_\theta$ is obtained from the matrices $A^i_\theta$ by the same Lie-bracket construction, with vector-field brackets replaced by sign-flipped matrix commutators.
Each interval acts on the hidden state by a linear map and the hidden states can be computed in parallel using an associative scan \citep{walker2025structuredlinearcdesmaximally}. 
For SLiCEs, the efficiency of the chosen matrix structure is retained only when the Log-ODE construction remains inside that structure.
This is true for block-diagonal SLiCEs, since block-diagonal matrices are closed under products, commutators, and matrix exponentials \citep{walker2025structuredlinearcdesmaximally}.

We refer to the resulting structure-preserving Log-ODE variant of SLiCE as Log-SLiCE.
As shown in Section~\ref{sec:log_ode_exp}, Log-SLiCE reduces the time per training step by up to $3{,}000\times$ on time series with roughly $18{,}000$ observations when compared to standard NCDEs.
Figure~\ref{fig:log_linear_cde} illustrates the Log-SLiCE pipeline.
The next section constructs the map from data observations to log-signatures over intervals.

\begin{figure}
    \centering
    \includegraphics[width=\textwidth]{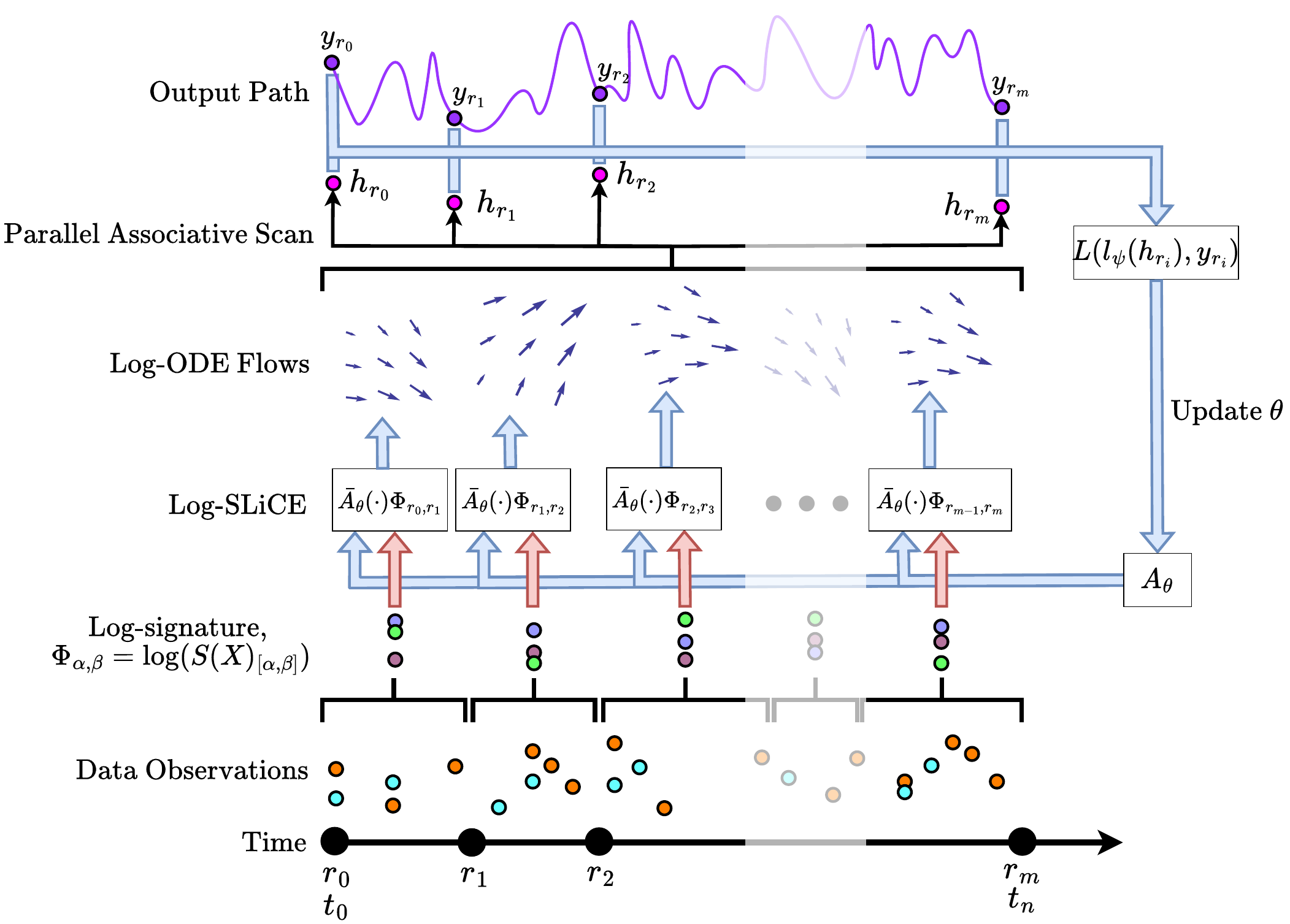}
    \caption{\textbf{Schematic of a Log-SLiCE.} Irregular observations are first summarised by interval log-signatures $\Phi_{\alpha,\beta}=\log(S(X)_{[\alpha,\beta]})$. The iterated Lie brackets of the matrices $A^i_\theta$, represented by $\bar A_\theta$, map each log-signature to a Log-ODE flow. These interval-wise hidden-state updates are composed using a parallel associative scan. The resulting hidden states $h_{r_i}$ are decoded to produce estimates of the output values $y_{r_i}$, and the loss $L(l_{\psi}(h_{r_i}),y_{r_i})$ is used to update the parameters.}
    \label{fig:log_linear_cde}
\end{figure}

\subsection{From observations to log-signatures}
\label{sec:procedure}

Suppose we have $d_{\mathrm{disc}}$ channels for which we observe values at discrete times in $[0,T]$.
We write the observation stream as
\begin{equation}
    x
    =
    ((t_0,s_0,x_0),\ldots,(t_n,s_n,x_n)),
    \qquad
    0\leq t_0<\cdots<t_n\leq T,
\end{equation}
where $s_i\subseteq\{1,\ldots,d_{\mathrm{disc}}\}$ is the non-empty set of channels observed at time $t_i$, and $x_i^k\in\mathbb R$ is the observed value of channel $k\in s_i$.
The stream can be irregular and asynchronous, because the gaps $t_{i+1}-t_i$ need not be equal and different subsets $s_i$ can be observed at each time.
Suppose we also have $d_{\mathrm{cont}}$ continuously observed channels for which we can access their value at any time, with time itself being the canonical example.
The goal is to define a faithful path embedding of these continuous variables and observation streams, together with a procedure for efficiently computing the embedded path's log-signature over any query interval, so that it can be used with Log-NCDEs.
Rather than first constructing an interpolated path and then computing its log-signature, we define the interval log-signatures directly.

We first choose the coordinates of the path space.
For an embedding to be injective, the construction must retain both the last recorded value of each discretely observed channel and whether a new observation has occurred.
We therefore use
\begin{equation}
    d_X = 2d_{\mathrm{disc}} + d_{\mathrm{cont}},
\end{equation}
where the first $d_{\mathrm{disc}}$ coordinates give the current recorded values of the discretely observed channels, the next $d_{\mathrm{disc}}$ coordinates record observation counts in each channel, and the final $d_{\mathrm{cont}}$ coordinates store continuously observed variables.
The count coordinates distinguish an observation that repeats the previous recorded value from the absence of a new observation in that channel.
Where this distinction is irrelevant, the count coordinates can be omitted.
The resulting embedding then intentionally identifies streams that differ only by the insertion or removal of value-preserving observations.
This is a modelling choice, with the invariance-computational tradeoff discussed in Appendices~\ref{app:invariances} and~\ref{app:limitations}.

We convert each observation into an increment in $\mathbb R^{d_X}$.
For $k\in s_i$, let $\operatorname{prev}(i,k)$ be the most recent index $j<i$ such that $k\in s_j$, i.e. the previous index where channel $k$ is observed.
We base-point augment the stream by setting $x_{\operatorname{prev}(i,k)}^k=0$ if no such index exists.
The event increment at time $t_i$ is
\begin{equation}
    \delta_i
    =
    \sum_{k\in s_i}
    \left[
        \left(
            x_i^k
            -
            x_{\operatorname{prev}(i,k)}^k
        \right)e_k
        +
    e_{d_{\mathrm{disc}}+k}
    \right]
    \in
    \mathbb R^{d_X},
    \label{eq:event_increment_method}
\end{equation}
where $\{e_i\}_{i=1}^{d_X}$ is the standard basis of $\mathbb{R}^{d_X}$.
Thus each observed channel contributes its change since its previous observation, together with a unit increment in its count coordinate. 

Each $\delta_i$ is then included into the truncated free Lie algebra.
By default, we place $\delta_i$ in degree one and set all higher-order components to zero,
$\Delta_i = (0,\delta_i,0,\ldots,0) \in \mathfrak L^N(\mathbb R^{d_X})$.
This is where additional local information can be incorporated.
For example, if an observation event includes precomputed signed-area terms, or if a local segment has known iterated integrals, these higher-order terms can be stored directly in $\Delta_i$. We provide an example of this for Brownian motion paths in Section~\ref{sec:bm_results}. The corresponding signature factor associated with the observation event is $E_i = \exp(\Delta_i) \in T^N(\mathbb R^{d_X})$.

Let $C:[0,T]\to\mathbb R^{d_{\mathrm{cont}}}$ be a continuous path of bounded-variation representing the continuously observed variables, embedded in $\mathbb R^{d_X}$ as $\widetilde C_t=(0,\ldots,0,C_t)$, and write $\Gamma_{u,v}=S^N_{u,v}(\widetilde C)$ and $B_C=\exp(\tilde C_0)$.
For a query interval $[\alpha,\beta)$ containing events $\alpha\leq t_{i_1}<\cdots<t_{i_m}<\beta$, the signature over the interval is defined by
\begin{equation}
    G_{\alpha,\beta}(x,C)
    =
    \begin{cases}
    B_C\otimes\Gamma_{\alpha,t_{i_1}}
    \otimes E_{i_1}
    \otimes \Gamma_{t_{i_1},t_{i_2}} \otimes E_{i_2}
    \otimes \cdots
    \otimes E_{i_m}
    \otimes \Gamma_{t_{i_m},\beta},
    & \alpha=0,\\
    \Gamma_{\alpha,t_{i_1}}
    \otimes E_{i_1}
    \otimes \Gamma_{t_{i_1},t_{i_2}} \otimes E_{i_2}
    \otimes \cdots
    \otimes E_{i_m}
    \otimes \Gamma_{t_{i_m},\beta},
    & \alpha>0,
    \end{cases}
\end{equation}
with $B_C$ base-point augmenting $C$. 
Events at a right endpoint are assigned to the next interval, except that events at $T$ are included in the final interval.
The interval log-signature is
\begin{equation}
    \Phi_{\alpha,\beta}(x,C)
    =
    \log G_{\alpha,\beta}(x,C)
    \in
    \mathfrak L^N(\mathbb R^{d_X}).
    \label{eq:interval_log_signature_method}
\end{equation}
The Log-ODE method requires interval log-signatures on a partition $\pi = \{0=r_0<r_1<\cdots<r_M=T\}$, which we denote by $\varphi_\pi(x,C) = \left( \Phi_{r_0,r_1}(x,C), \ldots, \Phi_{r_{M-1},r_M}(x,C) \right)$. 
The construction is depicted schematically in Figure~\ref{fig:data_to_log_sigs}.

Once the event increments have been formed, each interval can be processed independently and its factors composed in parallel using an associative scan, allowing for efficient computation.
Furthermore, the construction is online, since for any $\pi$, the information needed to compute each $\Phi_{r_i,r_{i+1}}(x,C)$ depends only on information up to time $r_{i+1}$. 
Thus the only remaining desired property is that these interval summaries arise from a faithful path embedding, since NCDEs and Log-NCDEs are universal models on path space.
We therefore proceed by constructing a continuous path \(\Psi(x,C)\) whose interval log-signatures equal \(\Phi_{\alpha,\beta}(x,C)\) and which satisfies the following proposition.

\begin{proposition}[Faithfulness (informal)]
\label{prop:faithfulness}
Assume the observation-count coordinates are included and that $C$ contains time as a channel. Then the map $(x,C)\to\Psi(x,C)$ is faithful.
\end{proposition}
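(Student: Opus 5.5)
We first fix the path $\Psi(x,C)$ explicitly, then prove the two halves of faithfulness separately. Parametrise so that each event is a jump traversed in zero physical time. Concretely, let $\Psi(x,C)$ be the path in $\mathbb R^{d_X}$ that starts at the base point $\widetilde C_0$ and moves only in the continuous coordinates, following $\widetilde C$, between consecutive event times. At each $t_i$ it traverses the straight segment $\delta_i$, placed by reparametrisation or by a time change on an extended interval. For default degree-one $\Delta_i$ this gives the rectilinear path of \citet{morrill2022interpolation}. When higher-order terms are present in $\Delta_i$, we instead insert any bounded-variation loop whose truncated log-signature is $\Delta_i$; such a loop exists by Chow's theorem.

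Chen's identity, together with $S(\text{segment }\delta_i)=\exp(\delta_i)=E_i$, then shows immediately that the interval signatures of $\Psi$ equal $G_{\alpha,\beta}$. Hence its interval log-signatures equal $\Phi_{\alpha,\beta}$.

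\emph{Injectivity.} Suppose $\Psi(x,C)=\Psi(x',C')$ as paths, modulo the chosen equivalence, for instance tree-like equivalence when the model space is signatures. The count coordinates are nondecreasing, and they jump by exactly one in channel $k$ precisely when $k\in s_i$. Because time is a channel of $C$, the time coordinate is strictly increasing and tags every point with its physical time, so no two events can be merged or reordered, even though events occupy zero physical time. The plan is therefore as follows.
\begin{itemize}
\item[(i)] Read off $C=C'$ from the last $d_{\mathrm{cont}}$ coordinates, using base-point augmentation to fix $C_0$.
\item[(ii)] Read off the event times $t_i$ and sets $s_i$ from the times at which the count coordinates jump.
\item[(iii)] Recover the values by telescoping the value-coordinate increments: $x_i^k=\sum_{j\le i,\,k\in s_j}\delta_j^k$, using the zero base point.
\end{itemize}
The count coordinates are exactly what keeps a repeated value from being invisible. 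If the model input is the signature or log-signature sequence on a partition rather than the path, we instead argue that the path is recovered up to tree-like equivalence. The time coordinate, being strictly monotone, rules out tree-like cancellations, as in the standard ``time-augmented paths are determined by their signature'' argument.

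\emph{Continuity.} This is the main obstacle, because it requires choosing the topology on the data space. The number of events may vary, and times may collide or move across one another. I would topologise the stream space as a disjoint union over $n$ and over the sequence of observed-channel sets. Within each piece the topology is the Euclidean one on the strictly ordered simplex of times times the values, and $C$ is taken with the bounded-variation (or $p$-variation) metric. On the path side I would use $p$-variation with $1\le p<2$, or its truncated-signature counterpart. Within a fixed stratum, moving the times $t_i$ or the values $x_i^k$ slightly perturbs the segments $\delta_i$ continuously and shifts the insertion points continuously. Continuity of $C\mapsto\widetilde C$ and the concatenation estimates for $p$-variation then give continuity of $\Psi$.

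The delicate point is that a reparametrisation-invariant metric is needed on path space, for example the $p$-variation distance after a canonical parametrisation by time plus arc length. Otherwise the zero-duration jumps are awkward to compare. If this becomes difficult, I would fall back on showing continuity of $(x,C)\mapsto\varphi_\pi(x,C)$ directly. The event factors $E_i$ and $\Gamma_{u,v}$ depend continuously on their arguments, and products and $\log$ in $T^N$ are polynomial, hence continuous. Combined with injectivity and Hausdorffness, this lets Lemma~\ref{lem:transfer} apply.
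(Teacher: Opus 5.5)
Your skeleton matches the paper's proof: each event sits on an auxiliary-time window with physical time frozen, Chen's identity gives $\log S^N_{\alpha,\beta}(\Psi)=\Phi_{\alpha,\beta}$, and the data space carries a disjoint-union topology over event patterns. Injectivity then follows by reading off $C$, the event times, the sets $s_i$ from the counts, and the values by telescoping.

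\textbf{The gap is continuity.} You never fix how events are laid out in the parameter, so $\Psi$ is not yet a well-defined map into a fixed space. This is why you conclude that a reparametrisation-invariant metric is needed; it is not. The paper defines $\Psi$ on $I_m=[0,T+m+1]$ with a unit base-point window and unit event windows $\rho_i^x(u)=\min\{1,\max\{0,u-1-t_i-i\}\}$. Since $a\mapsto\min\{1,\max\{0,u-a\}\}$ is $1$-Lipschitz uniformly in $u$, you get $\|q_{x'}-q_x\|_\infty\le\sum_i|t_i'-t_i|$. Uniform continuity of $C$ and continuity of $o\mapsto\eta_s(o)$ then give sup-norm continuity on each stratum, and the disjoint union over $m$ in $\mathcal P$ absorbs the change of domain.

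\textbf{Your fallback is invalid.} The map $\varphi_\pi$ consists of depth-$N$ truncated log-signatures on a fixed partition, and it is not injective. For $N=1$ it keeps only the net increment over each interval. More generally, a stratum with many events in one interval has more real parameters than $M\dim\mathfrak L^N(\mathbb R^{d_X})$, so by invariance of domain no continuous injection exists. Lemma~\ref{lem:transfer} needs continuity and injectivity of the \emph{same} map, so you cannot pair continuity of $\varphi_\pi$ with injectivity of $\Psi$. This is exactly why the paper proves faithfulness at the path level and invokes universality of NCDEs and Log-NCDEs on path space.

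\textbf{Your signature-level injectivity argument rests on a false premise.} Along $\Psi$ the time channel is not strictly monotone: it is constant on the base-point window and on every event window. You already say that events occupy zero physical time, which contradicts "strictly increasing". So the standard uniqueness argument for time-augmented paths does not apply as stated. You would additionally need to show that any tree-like excursion is confined to a single constant-time window, whose truncated signature is still $\exp(\Delta_i)$. In any case this detour is unnecessary: the statement concerns $\Psi$ as an element of $\mathcal P$, where the paper locates the windows directly via $d_x(u)=u-q_x(u)$.

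\textbf{Smaller points.} For higher-order $\Delta_i$, Chow's theorem yields \emph{some} realising path, not a loop, since its increment is the degree-one part of $\Delta_i$. It does not yield a choice depending continuously on the record, which continuity requires. The paper assumes such a continuous realisation $\eta_s$, together with injective $\iota_s$, so that $o_i$ can be recovered from $\Delta_i=\log S^N_{0,1}(\eta_i)$. Finally, to reproduce $G_{0,\beta}$ the path must start at $0$ and traverse $0\to\widetilde C_0$ with time frozen. Starting at $\widetilde C_0$ drops the factor $B_C$.
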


The construction of \(\Psi(x,C)\) and formal statement and proof of Proposition~\ref{prop:faithfulness} are given in Appendix~\ref{app:proof_faithfulness}.
As with the rectilinear construction \citep{morrill2022interpolation}, each observation event is assigned an interval of auxiliary time along which value and count coordinates move continuously while physical time is held fixed. Continuously observed variables contribute their signature factors over the physical-time gaps between events.
Since signatures are invariant under reparametrisations of the auxiliary parameter, the durations assigned to these event intervals do not affect the resulting physical-time interval signatures. 

\begin{figure}
    \centering
    \includegraphics[width=1.0\textwidth]{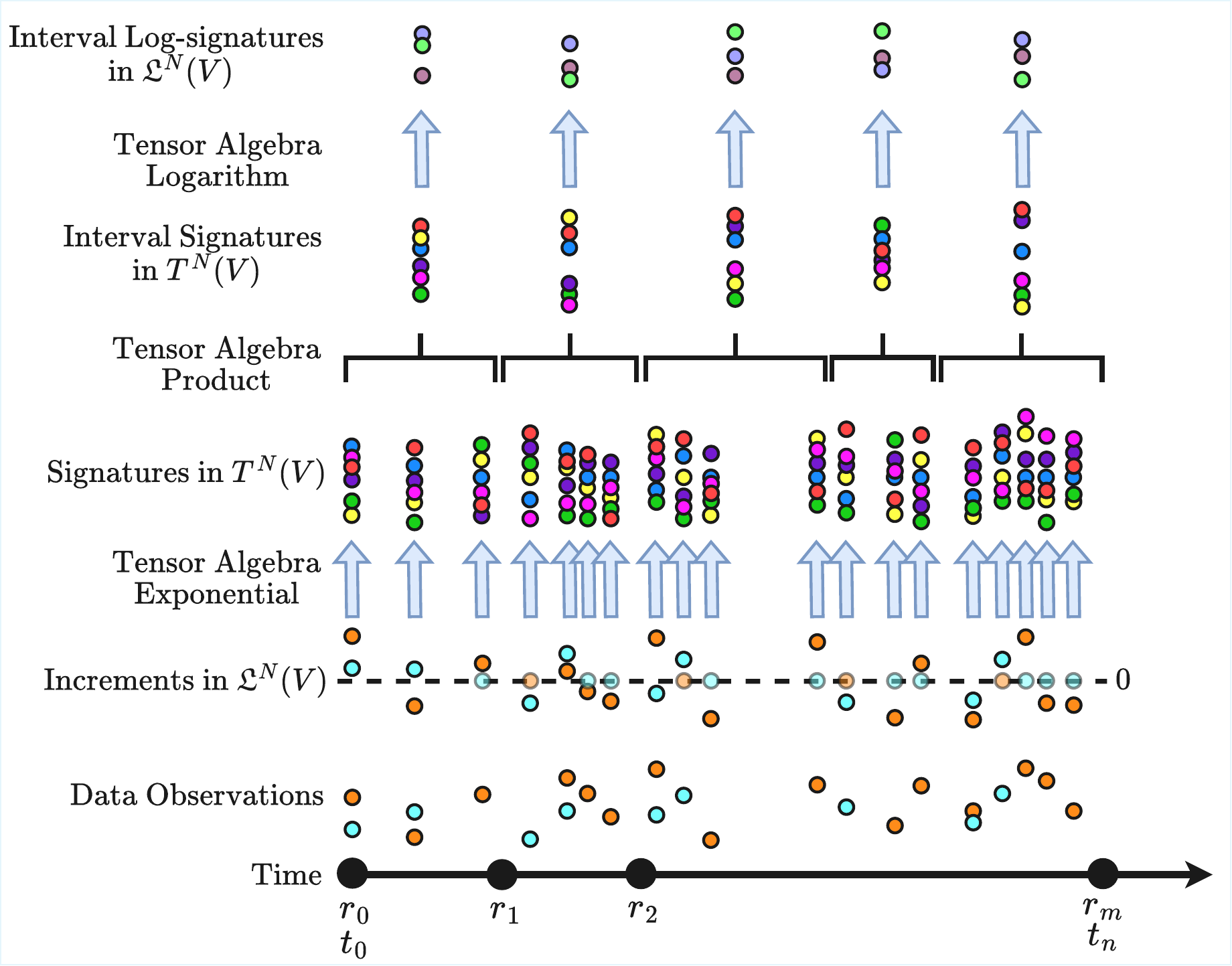}
    \caption{\textbf{Constructing interval log-signatures from irregular observations.}
    Each observation is converted into a Lie-algebra increment containing value increments and observation-count increments.
    Continuously observed variables, such as time, contribute signature factors over gaps between observations.
    Event and gap factors are composed over each query interval using a parallel associative scan with the tensor product, then mapped back to the free Lie algebra by the tensor logarithm.
    This gives interval log-signatures without interpolating the discretely observed channels.}
    \label{fig:data_to_log_sigs}
\end{figure}

\section{Experiments}
\label{sec:experiments}

We evaluate our proposed embedding in four settings.
First, we use EigenWorms, the longest dataset in the UEA multivariate time-series classification archive, to test the effect of the Log-ODE method on accuracy, speed, and GPU memory.
Second, we test prediction on a synthetic sinusoidal system with output grids independent of input observation times.
Third, we test prediction on a Brownian controlled system whose solution depends on second-order information.
Fourth, we evaluate robustness on six UEA classification datasets under input dropping.
The empirical scope is discussed further in Appendix~\ref{app:limitations}.

\subsection{Effect of the Log-ODE Method}
\label{sec:log_ode_exp}

Table~\ref{tab:log_method_comparison_ew} compares three NCDE models with and without the Log-ODE method on EigenWorms, holding all hyperparameters fixed to those selected by \citet{Walker2024LogNCDE} for NCDE and Log-NCDE.
EigenWorms has $17{,}984$ observations per series, making it a direct test of whether interval-level computation reduces the cost of long input sequences without sacrificing accuracy.
Timing and memory measurements were run on an NVIDIA H100 GPU with a batch size of $1$.
Further experimental details are given in Appendix~\ref{app:uea-eigenworms}.
On this benchmark, every Log-ODE variant is faster and more accurate.
Overall, combining the Log-ODE method with a SLiCE increases test accuracy by $11.1$ percentage points over a standard NCDE, while reducing the time per training step by a factor of almost $3{,}000$.
This cuts the time for $10{,}000$ training steps from $3$ days to $1.5$ minutes.
\begin{table}[t]
\centering
\small
\setlength{\tabcolsep}{10pt}
\renewcommand{\arraystretch}{1.2}
\caption{\textbf{NCDE models with and without the Log-ODE method on EigenWorms.}
Results marked~$^\dagger$ are from \citet{Walker2024LogNCDE}; results marked $^\ddagger$ are from \citet{walker2025structuredlinearcdesmaximally}.}
\begin{tabular}{cc|ccc}
\toprule
Model family & Log-ODE & Test Acc. & Time per step (s) & GPU Mem. (MB) \\
\midrule
NCDE
  & $\times$     & $75.0 \pm 3.9$$^\dagger$ & $26.02$ & $3484$ \\
  & $\checkmark$ & $85.6 \pm 5.1$$^\dagger$ & $2.263$ & $3494$ \\
\midrule
LNCDE
  & $\times$     & $87.2 \pm 5.2$ & $0.100$ & $13730$ \\
  & $\checkmark$ & $88.3 \pm 3.7$$^\ddagger$ & $0.018$ & $3492$ \\
\midrule
BD-SLiCE
  & $\times$     & $81.1 \pm 6.7$ & $0.024$ & $3494$ \\
  & $\checkmark$ & $86.1 \pm 3.6$$^\ddagger$ & $0.009$ & $3494$ \\
\bottomrule
\end{tabular}
\vspace{0.15cm}
\label{tab:log_method_comparison_ew}
\end{table}

\subsection{Synthetic coupled data}\label{sec:sinusoid}

For a synthetic benchmark, we consider the multivariate system
\begin{align}
\label{eqn:syn_dynamics}
z(t) &= \omega t+\phi, \nonumber\\
x_i(t) &= A_i\sin\bigl(z(t)+\delta_i\bigr), \qquad i=1,\ldots,d,
\end{align}
where $z$ is a shared latent phase which gives cross-channel structure. The parameters \(\omega\) and \(\phi\) are independently drawn per sample.
The parameters $A_i$ and $\delta_i$ are sampled independently per channel. We fix $d=2$. For further information on the set-up see Appendix~\ref{app:sinusoid}.

The observations are generated under four sampling regimes. 
The channels may be observed synchronously and regularly, synchronously and irregularly, asynchronously and irregularly, or at significantly different frequencies, where one channel is much sparser than the other.

For each sample, we draw a random query partition
\(
    0=q_0<q_1<\cdots<q_m=T,
\)
where $m$ varies between samples. 
We test whether models can learn the value at the end of each query interval $[q_k,q_{k+1})$, i.e. $x(q_{k+1})$, constructed from~\eqref{eqn:syn_dynamics}. This is not next-observation prediction since query endpoints need not coincide with observation times. 
The task therefore tests whether the model can use interval summaries to predict on an output grid chosen independently of the input sampling grid.

We first train one model for each data observation regime. Illustrative observation patterns and sample model outputs are shown in Figures~\ref{fig:example_sync_regular}--\ref{fig:example_async_sparse} of Appendix~\ref{app:sinusoid}. We then evaluate these pre-trained models across the other regimes to assess cross-regime generalisation. Figure~\ref{fig:sinusoid} reports the mean and standard deviation of this analysis. A model trained on synchronously observed, regularly sampled data struggles to generalise to asynchronously observed data in which one channel is sampled much more sparsely. This suggests that the model learns shortcuts when trained only on regularly sampled data. Conversely, because the asynchronous sparse regime is the hardest setting, a model trained on it remains robust across all other regimes. For irregularly sampled data, there is little difference between synchronous and asynchronous observations. This agrees with our view of faithful embeddings, since both regimes contain similar quantities of information and the model can learn to use it when it is faithfully represented. Figure~\ref{fig:sinusoid_loss_traj} shows that their training curves are also very similar. 


\begin{figure}[t]
    \centering
    \includegraphics[width=\linewidth]{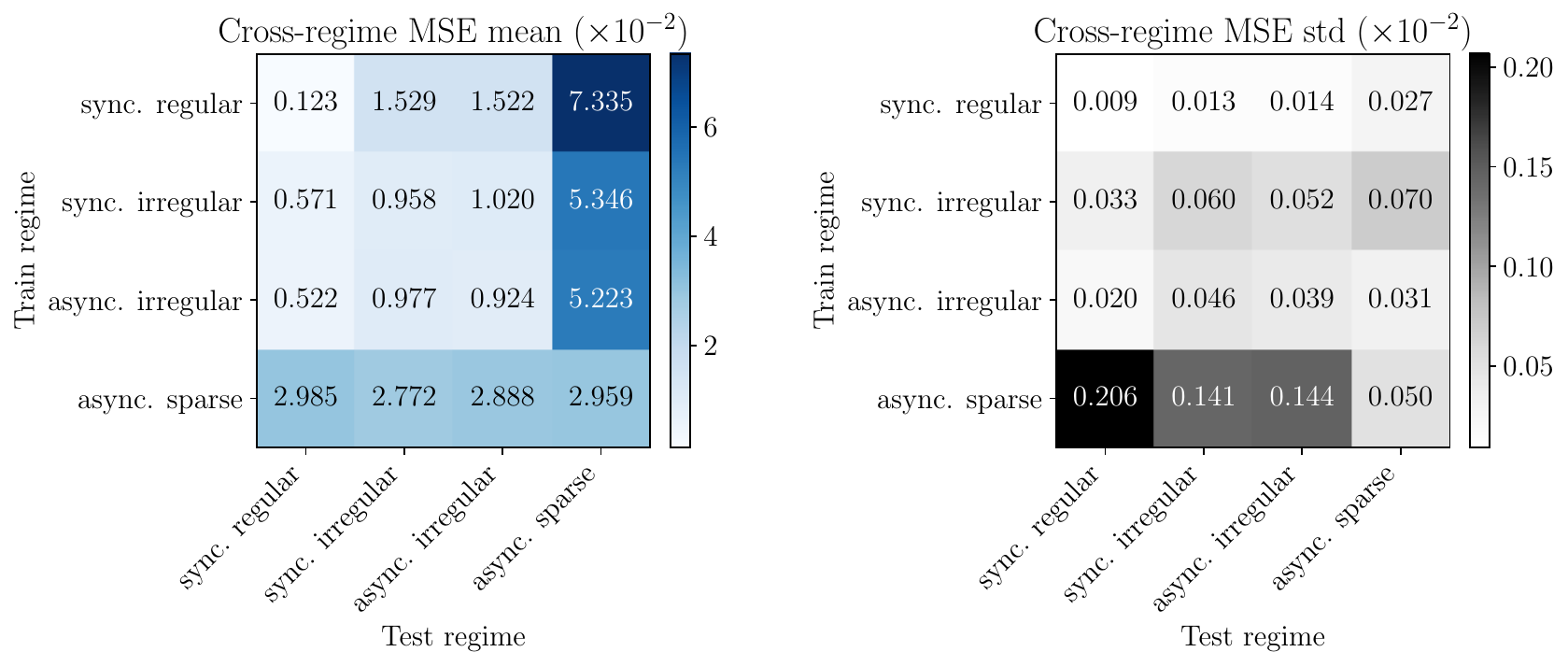}
    \caption{\textbf{Heatmap showing how well trained sinusoid models generalise across data sampling regimes.} The mean and standard deviation are computed over $5$ random seeds.
    }
    \label{fig:sinusoid}
\end{figure}

\subsection{Linear system driven by Brownian motion}
\label{sec:bm_results}

Next we test whether the model can exploit higher order information if it is available. We use precomputed Brownian rough paths, with the L\'{e}vy area computed as described by \citet{jelincic2025generative}. 
Each sample consists of a $4$-dimensional Brownian stream on a fine regular grid ($2048$ observations between $[0, 1]$), represented by log-signatures up to level $2$, giving an input dimension of $10$. 
These Brownian paths are used to drive a Stratonovich linear controlled system
\begin{equation}
    \mathrm{d}X_t
    =
    V_1X_t\circ\mathrm{d}W^1_t
    +
    V_2X_t\circ\mathrm{d}W^2_t,
\end{equation}
where $V_1,V_2\in\mathbb{R}^{r\times r}$ are fixed non-commuting matrices. Note that the CDE is only being controlled by two dimensions of the Brownian stream, therefore part of the task is to also learn which streams are redundant. We fix the dimension of the output path $r=2$.
The non-commutativity ensures that the solution depends not only on first-level Brownian increments, but also on second-level Lie bracket terms.
The model predicts the state at the next query endpoint,
\(
    y_k=X_{k+1},\: k=0,\ldots,m-1.
\)

In Figure~\ref{fig:bm_log}, we compare how our model performs when L\'{e}vy area information is provided compared with using only first level information, against the number of query intervals used.
The solution depends intrinsically on second-level information, and we see that lower MSE is achieved with the higher order information. As the number of query intervals increase, the truncation error of using only first level decreases in magnitude, hence the test MSE decreases for the model trained on level $1$. This highlights the flexibility of our approach in encompassing this additional information where it is available.
Precise MSE averaged over $5$ seeds can be seen in Table~\ref{tab:intervals_vs_error}. 

\begin{figure}
    \centering
    \includegraphics[width=0.6\linewidth]{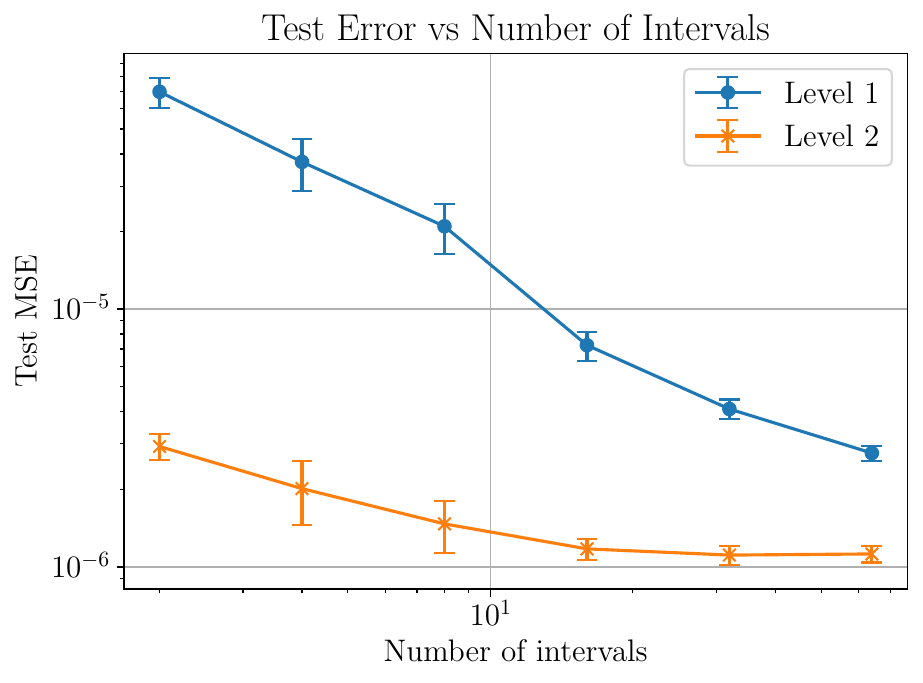}
    \caption{\textbf{Average test MSE against number of query intervals used for linear system driven by Brownian motion over $\boldsymbol{5}$ seeds.} As the system is driven by area information, we see a much lower MSE for when using level $2$ log-signatures as we would expect, especially for fewer query intervals.}
    \label{fig:bm_log}
\end{figure}

\subsection{UEA input dropping}

We evaluate robustness under input dropping on the six UEA time-series classification datasets chosen by \citet{Walker2024LogNCDE}.
The comparison includes four structured recurrent baselines, S5 \citep{S5}, LRU \citep{orvieto2023resurrecting}, S6, and Mamba \citep{gu2024mamba}, and an NCDE using Hermite cubic interpolation with backward differences \citep{kidger2020neuralcde}.
We test our proposed embedding using a Log-NCDE \citep{Walker2024LogNCDE}, diagonal Log-SLiCE (D-SLiCE), and block-diagonal Log-SLiCE (BD-SLiCE) \citep{walker2025structuredlinearcdesmaximally}.
For path-based models, we keep the original query windows fixed and recompute interval log-signatures from the remaining observations.
Full details and per-dataset results are given in Appendix~\ref{app:uea-experiments}.

\begin{table*}[t]
  \centering
  \small
  \setlength{\tabcolsep}{4.5pt}
  \caption{\textbf{Results on the six UEA datasets.}
  Average test accuracy (in \%) and average dataset rank with no dropping and $30\%$, $70\%$, and $95\%$ input dropping.
  Lower rank is better.
  No-drop values marked $^\dagger$ are from \citet{Walker2024LogNCDE}; no-drop values marked $^\ddagger$ are from \citet{walker2025structuredlinearcdesmaximally}.}
  \begin{tabular}{lcc|cc|cc|cc}
  \toprule
  & \multicolumn{2}{c|}{No drop} & \multicolumn{2}{c|}{30\% drop} & \multicolumn{2}{c|}{70\% drop} & \multicolumn{2}{c}{95\% drop} \\
  Model
  & Av.\ Acc. & Av.\ Rank
  & Av.\ Acc. & Av.\ Rank
  & Av.\ Acc. & Av.\ Rank
  & Av.\ Acc. & Av.\ Rank \\
  \midrule
  S5 & $61.8$$^\dagger$ & $4.67$$^\dagger$ & $63.4$ & $3.25$ & $60.1$ & $4.83$ & $56.7$ & $5.33$ \\
  LRU & $61.7$$^\dagger$ & $4.50$$^\dagger$ & $60.9$ & $5.58$ & $61.0$ & $5.00$ & $60.8$ & \underline{$4.17$} \\
  S6 & $62.0$$^\dagger$ & $5.00$$^\dagger$ & $58.4$ & $6.50$ & $61.4$ & $4.00$ & $58.9$ & $5.17$ \\
  Mamba & $58.6$$^\dagger$ & $6.67$$^\dagger$ & $59.5$ & $4.83$ & $60.6$ & $4.00$ & $57.6$ & $6.17$ \\
  NCDE & $60.2$$^\dagger$ & $5.33$$^\dagger$ & $59.6$ & $5.92$ & $58.9$ & $6.42$ & $58.7$ & $5.33$ \\
  Log-NCDE & $\mathbf{64.3}$$^\dagger$ & \underline{$3.00$}$^\dagger$ & $\mathbf{64.2}$ & \underline{$3.08$} & $\mathbf{62.7}$ & $4.67$ & $60.9$ & \underline{$4.17$} \\
  D-SLiCE & $61.7$$^\ddagger$ & $4.50$$^\ddagger$ & $61.7$ & $3.83$ & $62.2$ & \underline{$3.83$} & \underline{$62.2$} & $\mathbf{2.83}$ \\
  BD-SLiCE & \underline{$64.0$}$^\ddagger$ & $\mathbf{2.33}$$^\ddagger$ & \underline{$63.7$} & $\mathbf{3.00}$ & \underline{$62.5$} & $\mathbf{3.25}$ & $\mathbf{62.5}$ & $\mathbf{2.83}$ \\
  \bottomrule
  \end{tabular}
  \label{tab:uea_drop_summary}
\end{table*}

Table~\ref{tab:uea_drop_summary} shows that the continuous-time models using our embedding are competitive across all dropping levels. 
Log-NCDE gives the best macro-average accuracy at no drop, $30\%$ drop, and $70\%$ drop. 
BD-SLiCE gives the best macro-average accuracy at $95\%$ drop and the best or tied-best average rank at every dropping level. 
Its accuracy changes from $64.0\%$ with no dropping to $62.5\%$ with $95\%$ dropping, indicating robustness under severe sparsification. 
These results show that the robustness of our approach is consistent across datasets and NCDE architectures.

\section{Conclusion}

This paper argues that continuous-time representations should preserve observed data rather than attempt to reconstruct unobserved behaviour.
This principle is formalised through a general universality transfer result.
For Hausdorff data and model input spaces, any continuous injective embedding transfers universality on compact sets from the model input space back to the data space. 
Thus, for a universal downstream model, an embedding need only be continuous and injective. 
Guided by this, we introduced a faithful embedding for irregular and partially observed streams tailored to Log-NCDEs.
The construction records observations as tensor-algebra factors, composes them over arbitrary query intervals, and maps the result to interval log-signatures for use with the Log-ODE method.
This approach avoids interpolation of observed variables, supports online prediction, decouples input observation times from output query times, and is computable in parallel over intervals and parallel-in-time within intervals.
Experiments on synthetic controlled systems and UEA classification tasks show that the resulting models are accurate, efficient, and robust under irregularity, asynchrony, and severe input dropping.


\section*{Acknowledgements}

\small 
Benjamin Walker is supported by UK Research and Innovation (UKRI) through the Engineering and Physical Sciences Research Council (EPSRC) via Programme Grant [Grant No.\ UKRI1010: High order mathematical and computational infrastructure for streamed data that enhance contemporary generative and large language models] and CIMDA@Oxford, part of the AIR@InnoHK initiative funded by the Innovation and Technology Commission, HKSAR Government.
Terry Lyons is supported by UK Research and Innovation (UKRI) through the Engineering and Physical Sciences Research Council (EPSRC) via Programme Grants [Grant No.\ UKRI1010: High order mathematical and computational infrastructure for streamed data that enhance contemporary generative and large language models], [Grant No.\ EP/S026347/1: Unparameterised multi-model data, high order signatures and the mathematics of data science], [Grant No.\ EP/Y028872/1: Mathematical Foundations of Intelligence: An Erlangen Programme for AI], and the UKRI AI for Science award [Grant No.\ UKRI2385: Creating Foundational Benchmarks for AI in Physical and Biological Complexity]. Terry Lyons is also supported by The Alan Turing Institute under the Defence and Security Programme (funded by the UK Government) and through the provision of research facilities; by the UK Government; and through CIMDA@Oxford, part of the AIR@InnoHK initiative funded by the Innovation and Technology Commission, HKSAR Government. The authors acknowledge support from His Majesty's Government in the development of this research. 

\newpage
\bibliographystyle{plainnat}
\bibliography{references}

\newpage
\appendix

\section{Additional Details on Faithful Embeddings}
\label{app:faithful-embeddings}

This appendix gives the technical details behind the faithful-embedding criterion used in Section~\ref{sec:theory}.
The key point is that a continuous injective embedding preserves compact sets up to homeomorphism.
Consequently, a universal function class on the embedded space can be pulled back to a universal function class on the original data space.
We first prove the transfer result, then state the corresponding non-injectivity obstruction, and finally record the standard signature-universality consequence.

\subsection{Universality transfer}

\begin{lemma}[Universality Transfer]
\label{lem:transfer-app}
Let $\mathcal{X}$ and $\mathcal{P}$ be Hausdorff topological spaces, and let $\varphi : \mathcal{X} \to \mathcal{P}$ be a continuous injection.
Suppose $\mathcal{F} \subseteq C(\mathcal{P}; \mathbb{R})$ is a family of continuous real-valued functions such that for every compact $K \subseteq \mathcal{P}$, the restrictions $\mathcal{F}|_K$ are dense in $C(K; \mathbb{R})$ in the uniform topology.
Then for every compact $K_0 \subseteq \mathcal{X}$, the restrictions $(\mathcal{F} \circ \varphi)|_{K_0}$ are dense in $C(K_0; \mathbb{R})$ in the uniform topology, where
\[
    \mathcal{F} \circ \varphi := \{ f \circ \varphi : f \in \mathcal{F} \}.
\]
\end{lemma}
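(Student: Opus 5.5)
The plan is to reduce everything to the compact image $K := \varphi(K_0) \subseteq \mathcal{P}$ and to show that $\varphi$ identifies $K_0$ with $K$ topologically. Once that identification is in place, the density hypothesis on $K$ can be pulled back along $\varphi$.

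First, fix a compact $K_0 \subseteq \mathcal{X}$ and set $K = \varphi(K_0)$. Since $\varphi$ is continuous, $K$ is compact in $\mathcal{P}$. By hypothesis, $\mathcal{F}|_K$ is then dense in $C(K;\mathbb{R})$.

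Next, consider the restriction $\varphi|_{K_0} : K_0 \to K$. It is a continuous bijection, surjective by construction and injective because $\varphi$ is. Its domain is compact and its codomain $K$ is Hausdorff as a subspace of the Hausdorff space $\mathcal{P}$. By the standard closed map lemma, such a map is a homeomorphism. To verify this, take a closed $F \subseteq K_0$. It is compact, so $\varphi(F)$ is compact in the Hausdorff space $K$ and hence closed. This is the only step requiring any care, and it is exactly where the Hausdorff assumption on $\mathcal{P}$ enters. Write $\psi : K \to K_0$ for the continuous inverse.

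With the homeomorphism in hand, the density argument runs as follows. Let $g \in C(K_0;\mathbb{R})$ and $\varepsilon > 0$. The function $h := g \circ \psi$ is continuous on $K$. Choose $f \in \mathcal{F}$ with $\sup_{y \in K} |f(y) - h(y)| < \varepsilon$. For every $x \in K_0$ we have $\varphi(x) \in K$ and $h(\varphi(x)) = g(x)$, so
\[
\sup_{x\in K_0} |f(\varphi(x)) - g(x)| = \sup_{y \in K} |f(y) - h(y)| < \varepsilon,
\]
where the equality uses that $\varphi|_{K_0}$ is a bijection onto $K$. Hence $(\mathcal{F}\circ\varphi)|_{K_0}$ is uniformly dense in $C(K_0;\mathbb{R})$.

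I expect no real obstacle. The one point worth flagging is the homeomorphism step: without it, $g \circ \psi$ need not be continuous and the hypothesis on $\mathcal{F}$ could not be applied. The Hausdorff property of $\mathcal{X}$ is not actually needed for this argument; only compactness of $K_0$ and the Hausdorff property of $\mathcal{P}$ are used.
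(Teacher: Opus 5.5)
Your proof is correct and follows the paper's argument step for step: compactness of $K=\varphi(K_0)$, the fact that $\varphi|_{K_0}$ is a continuous bijection from a compact space onto a Hausdorff space and hence a homeomorphism, and pulling the target function back through its continuous inverse before approximating on $K$. Your remark that the Hausdorff assumption on $\mathcal{X}$ is never used is also accurate.
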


\begin{proof}
Let $K_0 \subseteq \mathcal{X}$ be compact, let $f \in C(K_0; \mathbb{R})$, and let $\varepsilon > 0$.
Set $K := \varphi(K_0) \subseteq \mathcal{P}$.
The continuous image of a compact set is compact, so $K$ is compact.
The restriction $\varphi|_{K_0} : K_0 \to K$ is a continuous bijection from a compact space to a Hausdorff space, hence a homeomorphism.
In particular, $(\varphi|_{K_0})^{-1}$ is continuous.

Let
\[
    g := f \circ (\varphi|_{K_0})^{-1} : K \to \mathbb{R}.
\]
Then $g \in C(K; \mathbb{R})$.
By assumption, $\mathcal{F}|_K$ is dense in $C(K; \mathbb{R})$ in the uniform topology, so there exists $h \in \mathcal{F}$ with
\[
    \sup_{p \in K} |g(p) - h(p)| < \varepsilon.
\]
For every $x \in K_0$,
\[
    g(\varphi(x))
    =
    f((\varphi|_{K_0})^{-1}(\varphi(x)))
    =
    f(x).
\]
Therefore
\[
    \sup_{x \in K_0} |f(x) - h(\varphi(x))|
    =
    \sup_{x \in K_0} |g(\varphi(x)) - h(\varphi(x))|
    =
    \sup_{p \in K} |g(p) - h(p)|
    < \varepsilon.
\]
Since $h \circ \varphi \in \mathcal{F} \circ \varphi$, this proves that $(\mathcal{F} \circ \varphi)|_{K_0}$ is dense in $C(K_0; \mathbb{R})$.
\end{proof}

The proof uses only the standard fact that a continuous bijection from a compact space to a Hausdorff space is a homeomorphism.
In the main text, $\mathcal{X}$ is the original data space, $\mathcal{P}$ is the model input space, and $\varphi$ is the embedding.

\subsection{Non-injectivity obstruction}

We now record the complementary failure mode.
If an embedding identifies two distinct data objects, then every downstream function of the embedding identifies them as well.
Thus no downstream class can approximate continuous functions that separate those data objects.

\begin{proposition}[Non-faithfulness implies non-universality]
\label{prop:non-universality}
Let $\mathcal{X}$ and $\mathcal{P}$ be Hausdorff topological spaces, let $\varphi:\mathcal{X} \to \mathcal{P}$ be a continuous map, and let $\mathcal{F} \subseteq C({\mathcal{P}},\mathbb{R})$. If $\varphi$ is not injective on a compact $K_0 \subseteq \mathcal{X}$, then $(\mathcal{F} \circ \varphi)|_{K_0}$ is not dense in $C(K_0; \mathbb{R})$ in the uniform topology, where
\[
    \mathcal{F} \circ \varphi := \{f \circ \varphi : f \in \mathcal{F}\}.
\]
\end{proposition}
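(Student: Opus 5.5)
The plan is to exhibit one continuous function on $K_0$ that separates two points identified by $\varphi$, and then show that no element of $(\mathcal{F}\circ\varphi)|_{K_0}$ comes within uniform distance $1/2$ of it.

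Since $\varphi$ is not injective on $K_0$, fix distinct $x_1, x_2 \in K_0$ with $\varphi(x_1) = \varphi(x_2)$. The first step is to produce $g \in C(K_0;\mathbb{R})$ with $g(x_1) = 0$ and $g(x_2) = 1$. Here the Hausdorff assumption is used: $K_0$ is a compact Hausdorff space, hence normal. The singletons $\{x_1\}$ and $\{x_2\}$ are disjoint closed subsets of $K_0$, so Urysohn's lemma supplies such a $g$.

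Next, take any $f \in \mathcal{F}$. Because $\varphi(x_1) = \varphi(x_2)$, the composite $h = f\circ\varphi$ satisfies $h(x_1) = h(x_2) =: c$. The triangle inequality then gives
\[
1 = |g(x_2) - g(x_1)| \le |g(x_2) - c| + |c - g(x_1)| \le 2\sup_{x\in K_0}|g(x) - h(x)|.
\]
Hence every $h \in (\mathcal{F}\circ\varphi)|_{K_0}$ has uniform distance at least $1/2$ from $g$. The open ball of radius $1/2$ around $g$ therefore misses the family entirely, so the family is not dense.

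The only step needing care is the construction of the separating function $g$, and in particular the justification for applying Urysohn's lemma. The relevant space is $K_0$ itself, not $\mathcal{X}$, and it is $K_0$ that is compact Hausdorff and therefore normal. Continuity of $\varphi$ and the Hausdorff property of $\mathcal{P}$ are not needed for this direction, beyond ensuring that $\mathcal{F}\circ\varphi$ consists of continuous functions.
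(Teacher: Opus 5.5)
Your proposal is correct and follows essentially the same route as the paper: pick distinct $x_1, x_2 \in K_0$ with $\varphi(x_1)=\varphi(x_2)$, use normality of the compact Hausdorff space $K_0$ to get a continuous function separating them, and apply the triangle inequality to bound the uniform distance from below. The only difference is cosmetic: you normalise the separating function to have gap $1$ and get the bound $1/2$, while the paper uses an arbitrary separating $f$ with $\varepsilon = |f(x_1)-f(x_2)|/3$.
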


\begin{proof}
By assumption, there exist distinct $x_1, x_2 \in K_0$ with $\varphi(x_1) = \varphi(x_2)$.
Since $K_0$ is compact Hausdorff, it is normal.
Hence there exists $f \in C(K_0; \mathbb{R})$ with $f(x_1) \neq f(x_2)$.
Set
\[
    \varepsilon = \frac{|f(x_1) - f(x_2)|}{3}.
\]

Let $h \in \mathcal{F}$ be arbitrary.
Since $\varphi(x_1) = \varphi(x_2)$, we have $h(\varphi(x_1)) = h(\varphi(x_2))$, and therefore
\[
    |f(x_1) - h(\varphi(x_1))|
    +
    |f(x_2) - h(\varphi(x_2))|
    \geq
    |f(x_1) - f(x_2)|
    =
    3 \varepsilon.
\]
Hence one of the two summands is at least $3\varepsilon/2 > \varepsilon$, so
\[
    \sup_{x \in K_0} |f(x) - h(\varphi(x))| > \varepsilon.
\]
Since $h \in \mathcal{F}$ was arbitrary, no element of $\mathcal{F} \circ \varphi$ approximates $f$ to within $\varepsilon$ on $K_0$.
Thus $(\mathcal{F} \circ \varphi)|_{K_0}$ is not dense in $C(K_0; \mathbb{R})$.
\end{proof}

Lemma~\ref{lem:transfer-app} and Proposition~\ref{prop:non-universality} give the criterion used in the main text.
Continuous and injective embeddings preserve universality on compact sets.
Non-injective embeddings cannot preserve universality, because they discard information about the original data.

\subsection{Signature representations of data}

Classical signature universality for paths has the same form as Lemma~\ref{lem:transfer-app}, after restricting to a class of paths on which the signature map is injective, or equivalently after quotienting by tree-like equivalence \citep{hambly2010uniqueness, BOEDIHARDJO2016720}. Recent work has also extended signature universality results to stochastic settings \citep{cuchiero2023universalapproximationtheoremscontinuous, ceylan2026universalapproximationsignaturesnongeometric, chevyrev2026orthogonalpolynomialspathspace}.
The following corollary records the same schema with paths replaced by data streams.
Any continuous injection from data into the tensor algebra inherits universality through linear functionals.

\begin{corollary}[Universality of signature representations of data]
\label{cor:sig_universality}
Let $d,m,N\in\mathbb{N}$, let
\[
    T(\!(\mathbb{R}^{m})\!)=\prod_{k=0}^{\infty}(\mathbb{R}^{m})^{\otimes k}
\]
be equipped with the product topology, and let $G \subseteq T(\!(\mathbb{R}^{m})\!)$ be the group-like elements with the subspace topology.
Let $\iota : (\mathbb{R}^d)^{N} \to G$ be a continuous injection.
Then for every compact $K_0 \subseteq (\mathbb{R}^d)^{N}$, every continuous $f : K_0 \to \mathbb{R}$, and every $\varepsilon > 0$, there exists a continuous linear functional $\ell$ on $T(\!(\mathbb{R}^{m})\!)$ depending on only finitely many tensor-algebra 
coordinates with
\[
    \sup_{x \in K_0} \bigl| f(x) - \ell(\iota(x)) \bigr| < \varepsilon.
\]
\end{corollary}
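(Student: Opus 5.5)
The plan is to apply Lemma~\ref{lem:transfer-app} with $\mathcal{X}=(\mathbb{R}^d)^N$, $\mathcal{P}=G$, $\varphi=\iota$, and $\mathcal{F}$ the class of continuous linear functionals on $T(\!(\mathbb{R}^{m})\!)$ that depend on only finitely many coordinates, restricted to $G$. Both spaces are Hausdorff: $(\mathbb{R}^d)^N$ is a metric space, and $T(\!(\mathbb{R}^m)\!)$ is a countable product of finite-dimensional Euclidean spaces, hence Hausdorff (indeed metrisable), so its subspace $G$ is Hausdorff. Each such $\ell$ is a finite linear combination of coordinate projections, which are continuous in the product topology, so $\mathcal{F}\subseteq C(G;\mathbb{R})$. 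Once the hypothesis of the lemma is verified, namely that $\mathcal{F}|_K$ is dense in $C(K;\mathbb{R})$ for every compact $K\subseteq G$, the lemma gives density of $(\mathcal{F}\circ\iota)|_{K_0}$ in $C(K_0;\mathbb{R})$. That is exactly the claimed $\varepsilon$-approximation.

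The main step is this density on compact $K\subseteq G$, and I would prove it with Stone--Weierstrass. The first point is that $\mathcal{F}|_K$ is a linear subspace containing constants, since the coordinate of the empty word is identically $1$ on $G$. The second point is that it is closed under multiplication, which I would get from the shuffle identity: for group-like $\mathbf{g}$ and words $I,J$, $\mathbf{g}^I\mathbf{g}^J=\mathbf{g}^{I\,\sqcup\!\sqcup\, J}$, and the right-hand side is again a finite linear combination of coordinates. So $\mathcal{F}|_K$ is a unital subalgebra. The third point is that it separates points: if $\mathbf{g}\neq\mathbf{h}$ in $G$, they differ in some coordinate $I$, and the projection onto that coordinate separates them. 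Since $K$ is compact Hausdorff, Stone--Weierstrass then yields uniform density in $C(K;\mathbb{R})$.

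The only step needing care is the shuffle identity. It depends on the paper's definition of group-like elements. I would take it either as the definition ($\mathbf{g}^{I}\mathbf{g}^J=\mathbf{g}^{I\sqcup\!\sqcup J}$ for all words, equivalently $\Delta \mathbf{g}=\mathbf{g}\otimes\mathbf{g}$), or derive it from the characterisation of $G$ as $\exp$ of the free Lie algebra $\mathfrak{L}((\mathbb{R}^m))$ via Ree's theorem, which I would cite rather than reprove. Everything else is routine topology.
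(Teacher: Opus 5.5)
Your proposal is correct and matches the paper's proof: you apply Lemma~\ref{lem:transfer-app} with $\mathcal{F}$ the finitely supported linear functionals restricted to $G$, and you verify its density hypothesis by Stone--Weierstrass, using the empty-word coordinate for constants, the shuffle identity for closure under products, and coordinate projections for point separation. You also spell out the Hausdorff and continuity checks that the paper leaves implicit.
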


\begin{proof}
Apply Lemma~\ref{lem:transfer-app} with $\mathcal{X} = (\mathbb{R}^d)^{N}$, $\mathcal{P} = G$, $\varphi = \iota$, and $\mathcal{F}$ the family of restrictions to $G$ of finitely supported continuous linear functionals on $T(\!(\mathbb{R}^{m})\!)$.
The density hypothesis on $\mathcal{F}$ follows from the shuffle identity and the real Stone-Weierstrass theorem, since these functionals contain the constants, separate points of $G$, and form an algebra on $G$ \citep{reutenauer1993free, StoneWeierstrass, Lyons2014}.
\end{proof}

This is the same mechanism used in signature-based universality results.
The only additional requirement is that the representation must separate the original data objects.
For paths, this is the injectivity of the signature after removing tree-like equivalence.
For data streams, it is the injectivity of the chosen embedding $\iota$.

Since linear functionals of signatures are universal on compact path classes once injectivity is established, truncated signature features provide expressive representations for statistical learning tasks \citep{levin2016learningpastpredictingstatistics}. Signature-based kernels extend this viewpoint to kernel methods on sequential data \citep{kiraly2019kernels}, while subsequent work develops learning directly with the full infinite-dimensional signature kernel \citep{salvi2021signature, salvi2021rough, lemercier2021distribution, lemercier2021siggpde, manten2025signature}.

Signatures have a number of other useful properties, including that their iterated-integral coordinates are unchanged by translations of the path and by increasing reparameterisations of time \citep{Chen1954Iterated}. Together with universality, these properties have led to their use across a wide range of domains, including healthcare \citep{Perez_Arribas2018-xp, Moore2019-tp, wang19e_interspeech, Morrill2019, cohen2023subtle, Vauvelle2022NeuralSignatureEHR}, quantitative finance \citep{Arribas2018DerivativesPU, arribas2020sigsdes, horvath2023optimal, cirone2025rough, cohen2023nowcasting}, information theory \citep{salvi2023structure, shmelev2024sparse}, cybersecurity \citep{cochrane2021sk}, and computational neuroscience \citep{holberg2024exact}.

Neural controlled differential equations extend this framework to learnable continuous-time models driven by paths, with applications including counterfactual prediction in healthcare \citep{seedat2022continuous}, economic nowcasting \citep{seonkyu2024bridging}, survival prediction \citep{zeng2025trajsurv}, anomaly detection for driving assistance \citep{lee2024gdflowanomalydetectionncdebased}, dynamic graph learning \citep{qin2023learningdynamicgraphembeddings, berndt2025permutation}, and traffic forecasting \citep{choi2022STGNCDE, choi2023graphneuralroughdifferential}.

\subsection{Proof of faithfulness}
\label{app:proof_faithfulness}

We give a formal version of Proposition~\ref{prop:faithfulness}.
The input consists of a continuously observed path \(C\) and an observation stream \(x\).
We assume throughout that \(C\) contains physical time as one of its channels.

Let \(\tau\in\{1,\ldots,d_{\mathrm{cont}}\}\) denote the continuous coordinate corresponding to physical time.
Thus
\[
    C_t^\tau=t,
    \qquad
    t\in[0,T].
\]
Let \(\mathcal C_{\mathrm{BV}}^{\tau}\) be the space of bounded-variation paths
\[
    C:[0,T]\to\mathbb R^{d_{\mathrm{cont}}}
\]
satisfying this condition, equipped with the uniform topology.

For a fixed event pattern
\[
    \sigma=(s_0,\ldots,s_{m-1}),
\]
let \(\mathcal X_\sigma\) be the corresponding stream component.
On this component, the number of events and the observed sets are fixed, while the event times and local event records vary.
We equip each \(\mathcal X_\sigma\) with the standard product topology.
We equip
\[
    \mathcal X
    =
    \coprod_\sigma \mathcal X_\sigma
\]
with the disjoint-union topology.
The joint input space is
\[
    \mathcal D
    =
    \mathcal C_{\mathrm{BV}}^{\tau}
    \times
    \mathcal X.
\]

For a stream with \(m\) events, set
\[
    I_m=[0,T+m+1].
\]
The target path space is
\[
    \mathcal P
    =
    \coprod_{m\geq0}
    C(I_m,\mathbb R^{d_X}),
\]
with the disjoint-union topology and the uniform topology on each component.
The realised paths below lie in the bounded-variation subspace of \(\mathcal P\).

We write \(o_i\) for the local record inserted at event \(i\).
For each observed set \(s\), assume that local records are included through a continuous injective map
\[
    \iota_s:o\mapsto\Delta\in\mathfrak L^N(\mathbb R^{d_X}).
\]
We assume that \(\Delta\) uses only the first \(2d_{\mathrm{disc}}\) coordinates.
We also assume that its degree-one count coordinates satisfy
\[
    \pi_1(\Delta)^{d_{\mathrm{disc}}+k}
    =
    \mathbf 1_{\{k\in s\}}.
\]
Thus the count coordinates identify the observed set.
For higher-order local information, the inserted object must be expressed in log-signature coordinates.
If a local segment has a known truncated signature \(H\), then the element stored as the event factor is
\[
    \Delta=\log H.
\]

Assume that each local factor has a continuous bounded-variation realisation.
For each observed set \(s\), assume there is a continuous map
\[
    o\mapsto \eta_s(o)
\]
from local records into bounded-variation paths
\[
    \eta_s(o):[0,1]\to\mathbb R^{d_X},
\]
with the uniform topology, such that
\[
    \eta_s(o)(0)=0,
    \qquad
    S^N_{0,1}(\eta_s(o))=\exp(\iota_s(o)).
\]
We assume that \(\eta_s(o)\) uses only the first \(2d_{\mathrm{disc}}\) coordinates.
For event \(i\), write
\[
    \Delta_i=\iota_{s_i}(o_i),
    \qquad
    \eta_i=\eta_{s_i}(o_i).
\]

The default construction in the main text is the special case
\[
    \Delta_i
    =
    \sum_{k\in s_i}
    \left[
        \left(
            x_i^k
            -
            x_{\operatorname{prev}(i,k)}^k
        \right)e_k
        +
        e_{d_{\mathrm{disc}}+k}
    \right],
\]
with all higher-order components equal to zero.
In this degree-one case, one may take
\[
    \eta_i(u)=u\,\Delta_i.
\]
On each fixed event-pattern component, replacing raw observed values by these value increments is a homeomorphism.
The inverse is given by cumulative summation along each channel, starting from the fixed base point.

We now construct the path realisation.
Fix
\[
    (C,x)\in\mathcal D,
    \qquad
    x=((t_0,s_0,o_0),\ldots,(t_{m-1},s_{m-1},o_{m-1})).
\]
Let \(\widetilde C\) denote \(C\) embedded in the final \(d_{\mathrm{cont}}\) coordinates of \(\mathbb R^{d_X}\).

First define the initial auxiliary segment.
For \(u\in I_m\), set
\[
    \rho_\ast(u)
    =
    \min\{1,\max\{0,u\}\}.
\]
This segment encodes the initial continuous value \(\widetilde C_0\).
Its signature is
\[
    S^N_{0,1}(u\mapsto u\widetilde C_0)
    =
    \exp(\widetilde C_0)
    =
    B_C.
\]

For each event \(i=0,\ldots,m-1\), define
\[
    \rho_i^x(u)
    =
    \min\{1,\max\{0,u-1-t_i-i\}\}.
\]
The function \(\rho_i^x\) is zero before the event segment for event \(i\).
It increases linearly from zero to one during that event segment.
It is one after that event segment.

Define
\[
    q_x(u)
    =
    u-\rho_\ast(u)-\sum_{i=0}^{m-1}\rho_i^x(u).
\]
The function \(q_x\) is used only to write the path explicitly.
Since \(C\) contains physical time as a channel, \(q_x\) becomes the physical-time coordinate of the realised path.

Define
\[
    \Psi(x,C)(u)
    =
    \rho_\ast(u)\widetilde C_0
    +
    \left(
        \widetilde C(q_x(u))-\widetilde C_0
    \right)
    +
    \sum_{i=0}^{m-1}
    \eta_i\bigl(\rho_i^x(u)\bigr).
\]
This is a path in \(\mathbb R^{d_X}\).
At \(u=0\), the path starts at zero.
During the initial auxiliary segment, the path moves from \(0\) to \(\widetilde C_0\) while physical time is held fixed at \(0\).
After the initial auxiliary segment, the continuously observed coordinates follow \(\widetilde C(q_x(u))\).
During event segments, the continuously observed coordinates are held fixed while the event path \(\eta_i\) is traversed.

Since \(C_t^\tau=t\), and since the event paths \(\eta_i\) use only the first \(2d_{\mathrm{disc}}\) coordinates, the coordinate \(2d_{\mathrm{disc}}+\tau\) of \(\Psi(x,C)\) is
\[
    \Psi(x,C)^{2d_{\mathrm{disc}}+\tau}(u)
    =
    q_x(u).
\]
Thus physical time is already one of the channels of \(\Psi(x,C)\).
During the initial auxiliary segment and during event segments, this time channel is constant.
During ordinary parts of the path, this time channel increases at unit speed.

For a realised path \(\Psi=\Psi(x,C)\), define
\[
    a_\Psi(t)
    =
    \inf
    \left\{
        u\in I_m
        :
        \Psi^{2d_{\mathrm{disc}}+\tau}(u)=t
    \right\}.
\]
For \(\beta<T\), define
\[
    b_\Psi(\beta)=a_\Psi(\beta).
\]
For \(\beta=T\), define
\[
    b_\Psi(T)=T+m+1.
\]
This final-endpoint convention includes events at \(T\) in the final interval.

For a physical-time interval \([\alpha,\beta)\), define
\[
    S^N_{\alpha,\beta}(\Psi)
    =
    S^N_{a_\Psi(\alpha),b_\Psi(\beta)}(\Psi).
\]
This is the signature of the ordered auxiliary-time portion of \(\Psi\) whose time channel lies in \([\alpha,\beta)\), with events at \(T\) included when \(\beta=T\).

\begin{proposition}[Formal faithfulness]
\label{prop:formal_faithfulness}
Assume the observation-count coordinates are included, \(C\) contains time as a channel, and \(N\geq1\).
Under the local-factor assumptions above, the map
\[
    (C,x)\mapsto\Psi(x,C)
\]
from \(\mathcal D\) to \(\mathcal P\) is continuous and injective.
Moreover, for every \(0\leq\alpha<\beta\leq T\),
\[
    \log S^N_{\alpha,\beta}(\Psi(x,C))
    =
    \Phi_{\alpha,\beta}(x,C).
\]
\end{proposition}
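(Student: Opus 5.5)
We prove the three claims separately: continuity, injectivity, and the identity between interval log-signatures and $\Phi_{\alpha,\beta}$. Throughout we use the explicit formula for $\Psi(x,C)$ together with the fact that the time channel of $\Psi$ equals $q_x$.

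\textbf{Continuity.} The space $\mathcal D$ is a product of $\mathcal C^\tau_{\mathrm{BV}}$ with a disjoint union over event patterns $\sigma$. It therefore suffices to check continuity on each $\mathcal C^\tau_{\mathrm{BV}}\times\mathcal X_\sigma$ with $m$ fixed. On such a component the target component $C(I_m,\mathbb R^{d_X})$ is fixed, and we treat the three summands of $\Psi$ in turn.
\begin{itemize}
\item The clamp functions $\rho_\ast$ and $\rho_i^x$ are $1$-Lipschitz in $u$ and depend $1$-Lipschitz on $t_i$. Hence $q_x$ depends uniformly continuously on the event times, with $\|q_x-q_{x'}\|_\infty\le\sum_i|t_i-t_i'|$.
\item For the term $\widetilde C(q_x(u))$, write
\[
\widetilde C(q_x)-\widetilde C'(q_{x'})=\bigl(\widetilde C(q_x)-\widetilde C(q_{x'})\bigr)+\bigl(\widetilde C-\widetilde C'\bigr)(q_{x'}).
\]
The first difference is controlled by the uniform continuity of $C$ on the compact interval $[0,T]$. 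The second is bounded by $\|C-C'\|_\infty$.
\item The event terms $\eta_{s_i}(o_i)(\rho_i^x(u))$ are handled the same way. We use continuity of $o\mapsto\eta_s(o)$ in the uniform topology, and the fact that each $\eta_s(o)$ is uniformly continuous on $[0,1]$. The latter is uniform over small neighbourhoods of $o$, because uniform convergence $\eta_s(o')\to\eta_s(o)$ plus uniform continuity of the limit give equicontinuity.
\end{itemize}

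\textbf{Injectivity.} We recover $(C,x)$ from $\Psi$. The component index of $\mathcal P$ gives $m$. The coordinate $2d_{\mathrm{disc}}+\tau$ gives $q_x$, whose flat pieces are located exactly at $[0,1]$ and at $[1+t_i+i,\,2+t_i+i]$. From these we read off $m$ and all event times $t_i$; the $t_i$ are distinct because the stream is strictly increasing.
\begin{itemize}
\item \emph{Observed sets and records.} On the $i$-th flat piece, every other $\eta_j$ is constant and $\widetilde C$ is frozen. So $\Psi$ restricted to that piece, minus its starting value, equals $\eta_i$. This gives $S^N_{0,1}(\eta_i)=\exp\Delta_i$ and hence $\Delta_i$, using $N\ge1$ and the fact that $\log$ is a bijection onto the truncated Lie algebra. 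The degree-one count coordinates of $\Delta_i$ give $s_i$. Injectivity of $\iota_{s_i}$ then gives $o_i$.
\item \emph{Continuous path.} $\widetilde C_0=\Psi(1)$. On the non-flat parts of the path, $C(t)$ is recovered by evaluating the last $d_{\mathrm{cont}}$ coordinates at $u=a_\Psi(t)$, which is legitimate since the $\eta_i$ do not touch those coordinates.
\end{itemize}
For the default construction, we also note that raw values are recovered from increments by cumulative summation from the base point.

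\textbf{Log-signature identity.} We use Chen's identity to split $[a_\Psi(\alpha),b_\Psi(\beta)]$ at the endpoints of the flat pieces it contains. The conventions for $a_\Psi$ and $b_\Psi$ are arranged so that the intervals match the main text.
\begin{itemize}
\item Since $a_\Psi(t)$ is the first time the time channel reaches $t$, an event at $t_i=\alpha$ is included in $[a_\Psi(\alpha),\cdot]$.
\item An event at $t_i=\beta<T$ is excluded, since $b_\Psi(\beta)$ is the start of that flat piece.
\item Events at $T$ are included because $b_\Psi(T)$ is the end of $I_m$.
\item The initial segment $[0,1]$ lies in the interval only when $\alpha=0$, since $a_\Psi(0)=0$. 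Its signature is $\exp(\widetilde C_0)=B_C$.
\item On each event piece the increment of $\Psi$ is a translate of $\eta_i$. Signatures are translation invariant, so the factor is $S^N(\eta_i)=E_i$.
\item On the non-flat pieces between events, $\Psi$ is a translate of the time-shifted path $\widetilde C$, since $q_x$ is a unit-speed shift. By reparametrisation invariance the factor is $\Gamma_{t_{i_j},t_{i_{j+1}}}$, with the analogous factors at the two ends.
\end{itemize}
Multiplying these factors in order reproduces $G_{\alpha,\beta}$ exactly, and taking logarithms gives $\Phi_{\alpha,\beta}$.

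I expect the main obstacle to be this endpoint bookkeeping: making sure the $\inf$ in $a_\Psi$ and the $b_\Psi$ convention assign boundary events and the base-point segment exactly as in the definition of $G_{\alpha,\beta}$, including the degenerate case where $\alpha$ or $\beta$ coincides with an event time. The equicontinuity step in the continuity argument is the second, but routine, subtlety.
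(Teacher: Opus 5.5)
Your proposal is correct and follows essentially the same route as the paper. The paper's proof uses Chen's identity with translation and reparametrisation invariance for the interval identity, uniform estimates on a fixed event-pattern component using the $1$-Lipschitz clamps for continuity, and, for injectivity, recovery of the event times, of $C$ from the last $d_{\mathrm{cont}}$ coordinates, and of $o_i=\iota_{s_i}^{-1}(\Delta_i)$ after reading $s_i$ from the count coordinates.

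One small imprecision: when $t_0=0$, the initial piece $[0,1]$ and the first event piece $[1,2]$ merge into a single maximal flat piece of $q_x$, so ``flat pieces'' alone do not delimit the segments. The paper avoids this by locating segments through $d_x(u)=u-q_x(u)$ (initial segment where $0<d_x<1$, event $i$ where $1+i<d_x<2+i$). Using that, or reading off the length of the flat piece at level $0$, closes this.
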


\begin{proof}
We first prove the interval-signature identity.
Let \([\alpha,\beta)\) be a physical-time interval, with the same final-endpoint convention as in the main text.
Suppose the events assigned to this interval are
\[
    \alpha\leq t_{i_1}<\cdots<t_{i_r}<\beta,
\]
with events at \(T\) included when \(\beta=T\).

If \(\alpha=0\), then the selected auxiliary-time portion begins with the initial segment \(u\mapsto u\widetilde C_0\).
The signature of this initial segment is
\[
    B_C=\exp(\widetilde C_0).
\]
The selected portion is then the ordered concatenation of this initial segment, the continuous pieces, and the event pieces.
By Chen's identity,
\[
    S^N_{0,\beta}(\Psi(x,C))
    =
    B_C
    \otimes
    \Gamma_{0,t_{i_1}}
    \otimes
    \exp(\Delta_{i_1})
    \otimes
    \Gamma_{t_{i_1},t_{i_2}}
    \otimes
    \exp(\Delta_{i_2})
    \otimes
    \cdots
    \otimes
    \exp(\Delta_{i_r})
    \otimes
    \Gamma_{t_{i_r},\beta}.
\]
This is exactly \(G_{0,\beta}(x,C)\).

If \(\alpha>0\), then the selected auxiliary-time portion does not include the initial segment.
It is the ordered concatenation of the continuous pieces and event pieces
\[
    \widetilde C|_{[\alpha,t_{i_1}]},
    \eta_{i_1},
    \widetilde C|_{[t_{i_1},t_{i_2}]},
    \eta_{i_2},
    \ldots,
    \eta_{i_r},
    \widetilde C|_{[t_{i_r},\beta]}.
\]
Chen's identity gives
\[
    S^N_{\alpha,\beta}(\Psi(x,C))
    =
    \Gamma_{\alpha,t_{i_1}}
    \otimes
    \exp(\Delta_{i_1})
    \otimes
    \Gamma_{t_{i_1},t_{i_2}}
    \otimes
    \exp(\Delta_{i_2})
    \otimes
    \cdots
    \otimes
    \exp(\Delta_{i_r})
    \otimes
    \Gamma_{t_{i_r},\beta}.
\]
This is exactly \(G_{\alpha,\beta}(x,C)\).

Therefore, for every \(0\leq\alpha<\beta\leq T\),
\[
    \log S^N_{\alpha,\beta}(\Psi(x,C))
    =
    \log G_{\alpha,\beta}(x,C)
    =
    \Phi_{\alpha,\beta}(x,C).
\]

We now prove continuity.
It is enough to work on a fixed event-pattern component.
On such a component, the number of events and observed sets are fixed.
Let
\[
    (C^{(j)},x^{(j)})\to(C,x)
\]
in this component.
Then
\[
    C^{(j)}\to C
\]
uniformly, and for every event \(i\),
\[
    t_i^{(j)}\to t_i,
    \qquad
    o_i^{(j)}\to o_i.
\]
The map
\[
    a\mapsto \min\{1,\max\{0,u-a\}\}
\]
is \(1\)-Lipschitz uniformly in \(u\).
Hence
\[
    \|\rho_i^{x^{(j)}}-\rho_i^x\|_\infty
    \leq
    |t_i^{(j)}-t_i|
    \to0.
\]
It follows that
\[
    \|q_{x^{(j)}}-q_x\|_\infty
    \leq
    \sum_{i=0}^{m-1}
    \|\rho_i^{x^{(j)}}-\rho_i^x\|_\infty
    \to0.
\]

We control the initial continuous segment first.
Since
\[
    C^{(j)}_0\to C_0,
\]
we have
\[
    \sup_{u\in I_m}
    \left\|
        \rho_\ast(u)\widetilde C^{(j)}_0
        -
        \rho_\ast(u)\widetilde C_0
    \right\|
    \to0.
\]

We next control the continuously observed part after subtracting its initial value.
Define
\[
    F_C(t)=\widetilde C(t)-\widetilde C_0.
\]
Then
\[
    F_{C^{(j)}}\to F_C
\]
uniformly.
Since \(F_C\) is continuous on the compact interval \([0,T]\), it is uniformly continuous.
Therefore
\[
\begin{aligned}
    &\sup_{u\in I_m}
    \left\|
        F_{C^{(j)}}(q_{x^{(j)}}(u))
        -
        F_C(q_x(u))
    \right\| \\
    &\qquad\leq
    \|F_{C^{(j)}}-F_C\|_\infty
    +
    \sup_{u\in I_m}
    \left\|
        F_C(q_{x^{(j)}}(u))
        -
        F_C(q_x(u))
    \right\|.
\end{aligned}
\]
The first term tends to zero by uniform convergence.
The second term tends to zero by uniform continuity of \(F_C\) and the uniform convergence of \(q_{x^{(j)}}\) to \(q_x\).

The local realisations depend continuously on the local records.
Thus
\[
    \eta_i^{(j)}\to\eta_i
\]
uniformly on \([0,1]\).
Since each \(\eta_i\) is uniformly continuous on \([0,1]\), we have
\[
    \eta_i^{(j)}(\rho_i^{x^{(j)}}(\cdot))
    \to
    \eta_i(\rho_i^x(\cdot))
\]
uniformly.
Combining the initial segment, the continuous part, and the event part gives
\[
    \|\Psi(x^{(j)},C^{(j)})-\Psi(x,C)\|_\infty\to0.
\]
Since the argument holds on each fixed event-pattern component, and since both the stream space and the target path space use disjoint-union topologies, the map
\[
    (C,x)\mapsto\Psi(x,C)
\]
is continuous.

We now prove injectivity.
The target component of \(\mathcal P\) determines the number of events \(m\).
The time coordinate
\[
    u\mapsto
    \Psi(x,C)^{2d_{\mathrm{disc}}+\tau}(u)
\]
is the function \(q_x(u)\).
Let
\[
    d_x(u)=u-q_x(u).
\]
The initial auxiliary segment is the part where \(0<d_x(u)<1\).
For event \(i\), the event segment is the closure of the set
\[
    \{u\in I_m:1+i<d_x(u)<2+i\}.
\]
On this event segment, the time coordinate is constant and equal to \(t_i\).
Thus the realised path recovers all event times.

The final \(d_{\mathrm{cont}}\) coordinates of \(\Psi(x,C)\) recover \(C\).
Indeed, after the initial auxiliary segment, the final \(d_{\mathrm{cont}}\) coordinates are exactly
\[
    C(q_x(u)).
\]
The event paths \(\eta_i\) use only the first \(2d_{\mathrm{disc}}\) coordinates.
The time coordinate \(q_x\) is surjective onto \([0,T]\).
Therefore, for every \(t\in[0,T]\), choosing any \(u\geq1\) with \(q_x(u)=t\) recovers
\[
    C(t)
\]
from the final \(d_{\mathrm{cont}}\) coordinates of \(\Psi(x,C)(u)\).

Now consider the event segment corresponding to event \(i\).
On this segment, the first \(2d_{\mathrm{disc}}\) coordinates of \(\Psi(x,C)\) traverse \(\eta_i\), up to translation by the previously completed event pieces.
Signatures are invariant under translation.
Therefore the signature of the event segment is
\[
    S^N_{0,1}(\eta_i)
    =
    \exp(\Delta_i).
\]
Taking the truncated logarithm recovers
\[
    \Delta_i.
\]
The degree-one count coordinates of \(\Delta_i\) recover the observed set \(s_i\).
Since
\[
    \Delta_i=\iota_{s_i}(o_i)
\]
and \(\iota_{s_i}\) is injective, the local event record \(o_i\) is recovered by applying the inverse of \(\iota_{s_i}\) on its image.

In the default construction, the degree-one value coordinates of \(\Delta_i\) are
\[
    x_i^k-x_{\operatorname{prev}(i,k)}^k,
    \qquad
    k\in s_i.
\]
Together with the recovered event order and observed sets, these increments recover the raw observed values recursively from the fixed base point.
If
\[
    i_1<i_2<\cdots<i_r
\]
are the events at which channel \(k\) is observed, then
\[
    x_{i_\ell}^k
    =
    \sum_{j=1}^{\ell}
    \left(
        x_{i_j}^k
        -
        x_{\operatorname{prev}(i_j,k)}^k
    \right).
\]
Thus the full input \((C,x)\) is uniquely determined by \(\Psi(x,C)\).
The path realisation is injective.
\end{proof}

\subsection{Invariance of the Embedding}\label{app:invariances}

The path realisation in Proposition~\ref{prop:formal_faithfulness} is faithful when count coordinates are included and $C$ contains time as a channel.
We describe two natural relaxations, dropping the count coordinates, and dropping the continuous variables, in terms of the invariances they induce in the corresponding log-signatures over intervals.

First suppose the count coordinates $e_{d_{\mathrm{disc}} + k}$ are dropped from 
the driving space, so that $d_X = d_{\mathrm{disc}} + d_{\mathrm{cont}}$ 
and the discrete increment becomes
\[
\delta_i \;=\; \sum_{k \in s_i} \bigl( x_i^k - x_{\mathrm{prev}(i, k)}^k \bigr) e_k \;\in\; \mathbb{R}^{d_X}.
\]
An observation event at time $t_i$ with $x_i^k = x_{\mathrm{prev}(i, k)}^k$ 
for every $k \in s_i$ contributes $\delta_i = 0$, hence $E_i = \exp(0) = \mathbf{1}$, 
and the chronological tensor product $G_{\alpha, \beta}(x)$ is unaffected by 
the event's presence or absence.

Define the equivalence relation $\sim_{\mathrm{val}}$ on $\mathcal{X}$ by 
$x \sim_{\mathrm{val}} x'$ if and only if $x'$ is obtained from $x$ by a 
finite sequence of insertions or removals of value-preserving observation 
events. Then $G_{\alpha, \beta}(x) = G_{\alpha, \beta}(x')$ for every 
sub-interval $[\alpha, \beta)$ whenever $x \sim_{\mathrm{val}} x'$, so 
$\varphi_\pi$ without count coordinates is an invariant of 
$\mathcal{X} / \sim_{\mathrm{val}}$ rather than of $\mathcal{X}$.
The same holds true for the realised path embedding.

This is benign for streams in which value repeats carry no information. 
For continuous-valued sensor data, the set of streams with exact value 
repeats is measure-zero under any non-degenerate observation distribution, 
and a stream and its value-preserving insertions are typically 
indistinguishable for prediction purposes. For event-based discrete data, 
where value repeats are common and informative, for example a counter 
incremented at irregular times, or a binary state oscillating between two 
values, the equivalence class loses information that downstream tasks 
need, and count coordinates should be retained.

Now suppose the continuous variables are absent, so $\mathbb{R}^{d_X} = \mathbb{R}^{2 d_{\mathrm{disc}}}$ 
(or $\mathbb{R}^{d_{\mathrm{disc}}}$ if count coordinates are also dropped). 
Then there are no gap factors $\Gamma_{u, v}$, and the interval signature 
reduces to a product of event factors,
\[
G_{[\alpha, \beta)}(x) \;=\; \prod_{i \,:\, t_i \in [\alpha, \beta)} E_i.
\]
The product depends only on which events fall in the interval and their 
order, not on the precise event times $t_i$ themselves. An empty interval 
contributes $\mathbf{1}$ regardless of length.

Define the equivalence relation $\sim_{\mathrm{time}}$ on $\mathcal{X}$ 
by $x \sim_{\mathrm{time}} x'$ if and only if $x'$ is obtained from $x$ 
by changing event times $\{t_i\}$ to any other strictly increasing 
sequence in $[0, T]$, with the same observed channel sets and values. 
Then $G_{\alpha, \beta}(x) = G_{\alpha, \beta}(x')$ for every sub-interval 
$[\alpha, \beta)$ that contains the same event indices in $x$ and $x'$. 
For the partitioned encoding $\varphi_\pi$, this means $\varphi_\pi(x) = \varphi_\pi(x')$ 
whenever $x' \sim_{\mathrm{time}} x$ and the event-to-interval assignments 
are preserved. 
For the realised path embedding, this means that the path consists only of auxiliary event segments. 
The gaps between events carry no path increment, so changing the event times only changes the allocation of these auxiliary segments to physical-time intervals; if that allocation is preserved, the interval signatures of the realised path are unchanged.

The timestamps remain in the data stream as metadata, used to specify 
query intervals and to drive downstream continuous-time models, but they 
no longer enter the encoding itself. This is appropriate when the modelling 
task is concerned with the sequence of events and their values, not with 
the precise inter-event timings; for example, in classification problems 
where temporal regularity is irrelevant, or in event-driven control 
problems where only the event ordering matters. When precise timing 
information is needed, physical time should be included as a continuous variable.

\section{Limitations}
\label{app:limitations}

This section clarifies the main assumptions and scope of the proposed embedding and experiments. 

The first limitation concerns the faithful embedding itself. 
Proposition~\ref{prop:formal_faithfulness} assumes that observation-count coordinates are included and that physical time is present as a continuous channel. 
The count coordinates increase the dimension of the driving path, from \(d_{\mathrm{disc}}+d_{\mathrm{cont}}\) to \(2d_{\mathrm{disc}}+d_{\mathrm{cont}}\). 
This increases the size of the truncated tensor and log-signature representations. 
For truncation depth \(N\), the number of tensor coordinates up to depth \(N\) scales as \(\sum_{k=1}^{N} d_X^k = O(d_X^N)\). Although the log-signature dimension is smaller, scaling like \(O(d_X^N/N)\) by the Witt formula \citep{witt1937treue, reizenstein2020iisignature}, this still grows exponentially in \(N\). 
Thus enforcing exact injectivity through auxiliary channels can increase computational cost, especially for high-dimensional streams or high truncation depths. 
As discussed in Appendix~\ref{app:invariances}, omitting count coordinates or physical time is a modelling choice that intentionally introduces invariances. 
This can be appropriate when repeated observations or exact timestamps are not informative, but it can discard information in event-based or timing-sensitive tasks.

The second limitation concerns the interpretation of universality. 
The universality result is a compact-set approximation result and should not be interpreted as a guarantee of sample efficiency, optimisation success, or improved performance on every dataset. 
It shows that a continuous injective embedding does not obstruct approximation by a universal downstream model. 
The empirical behaviour still depends on the model class, truncation depth, optimisation procedure, data distribution, and availability of informative observations.

The final limitation concerns empirical scope. 
Our experiments cover synthetic controlled systems and six UEA time-series classification datasets, including robustness to input dropping. 
They do not exhaust all irregular, asynchronous, sparse, or event-based data regimes. 
The Brownian experiment assumes that second-order information, such as L\'evy area, is available or precomputed. 

\section{Further details on experiments}

Single runs for all experiments can be completed on a 24GB NVIDIA RTX 4090 GPU in less than $24$ hours. We use the following publicly available datasets, libraries, and baseline models:

\begin{itemize}
    \item \textbf{UEA Multivariate Time Series Classification Archive}~\cite{bagnall2018ueamultivariatetimeseries}.  
    License: GPL-3.0. \\
    URL: \url{https://www.timeseriesclassification.com/}, \url{https://github.com/time-series-machine-learning/tsml-repo}

    \item \textbf{S5}~\cite{S5}.  
    License: Apache 2.0. \\
    URL: \url{https://github.com/lindermanlab/S5}

    \item \textbf{LRU}~\cite{orvieto2023resurrecting}.  
    License: MIT. \\
    URL: \url{https://github.com/NicolasZucchet/minimal-LRU}

    \item \textbf{S6/Mamba}~\cite{gu2024mamba}.  
    License: Apache 2.0. \\  
    URL: \url{https://github.com/state-spaces/mamba}

    \item \textbf{Log-NCDE}~\cite{Walker2024LogNCDE}. License: CC-BY-4.0 \\
    URL: \url{https://github.com/Benjamin-Walker/log-neural-cdes}

    \item \textbf{JAX}~\cite{jax2018github}.  
    License: Apache 2.0. \\
    URL: \url{https://github.com/google/jax}
    
    \item \textbf{PyTorch}~\cite{paszke2019pytorchimperativestylehighperformance}.  
    License: BSD Style, see here \url{https://github.com/pytorch/pytorch?tab=License-1-ov-file} \\
    URL: \url{https://github.com/pytorch/pytorch}
\end{itemize}

\subsection{Synthetic coupled data} \label{app:sinusoid}

This section gives further experimental details for the synthetic coupled experiment described in Section~\ref{sec:sinusoid}. 
Each sample is generated on the time interval $[0,T]$ with terminal time $T=10$ and observed channels $d=2$. We restate the underlying continuous-time signal for ease of reference
\[
    z(t) = \omega t + \phi, \qquad
    x_i(t) = A_i \sin(z(t) + \delta_i), \qquad i=1,\ldots,d .
\]
For each sample, the shared angular frequency and global phase are drawn as
\[
    \omega \sim \mathcal{U}(0.8,1.6), \qquad
    \phi \sim \mathcal{U}(0,2\pi),
\]
and the channel-specific parameters are sampled independently as
\[
    A_i \sim \mathcal{U}(0.7,1.3), \qquad
    \delta_i \sim \mathcal{U}(0,2\pi).
\]
The shared latent phase $z(t)$ induces dependence between the two channels, while the channel-specific amplitudes and phase offsets prevent the task from reducing to identical copies of a one-dimensional signal. No observation noise is added in this experiment.

The observations are taken in $(0,T)$ and the target values are computed directly from the underlying continuous dynamics rather than from the observed samples. 
We evaluate four observation regimes. 
In the \textit{synchronous regular} regime, both channels are observed at the same fixed regular grid of 128 interior time points. In the \textit{synchronous irregular} regime, a
single irregular observation grid is sampled and shared by both channels. The grid is generated by a Poisson process with rate $\lambda \sim \mathcal{U}(8,10)$ per unit time, which means an expected $80$--$100$ samples over $[0, T]$.
In the \textit{asynchronous irregular} regime, each channel has its own independently sampled Poisson observation grid, again with channel-wise rate $\lambda_i \sim \mathcal{U}(8,10)$. Finally, in the \textit{asynchronous sparse} regime,
channels are observed asynchronously but at substantially different frequencies. The first channel is dense and second channel is sparse. Dense channels use rates $\lambda_i \sim \mathcal{U}(12,20)$, while sparse channels use rates $\lambda_i \sim \mathcal{U}(2,4)$. 
In all cases where we sample from Poisson process, if a channel has fewer than two observations, two observation times are sampled uniformly from $(0,T)$ as a fallback. This guarantees that every channel is observed at least twice.

Figures~\ref{fig:example_sync_regular}--\ref{fig:example_async_sparse} show samples from the four regimes. The synchronous regular case (Figure~\ref{fig:example_sync_regular}) has a fixed common input grid, the synchronous irregular case (Figure~\ref{fig:example_sync_irregular}) has an irregular but
shared input grid, the asynchronous irregular case (Figure~\ref{fig:example_async_irregular}) has independent irregular grids, and the asynchronous sparse case (Figure~\ref{fig:example_async_sparse}) combines asynchronous observation with a big difference in observation frequency across the channels.

\begin{figure}
    \centering
    \begin{subfigure}[t]{\textwidth}
        \centering
        \includegraphics[width=0.8\linewidth]{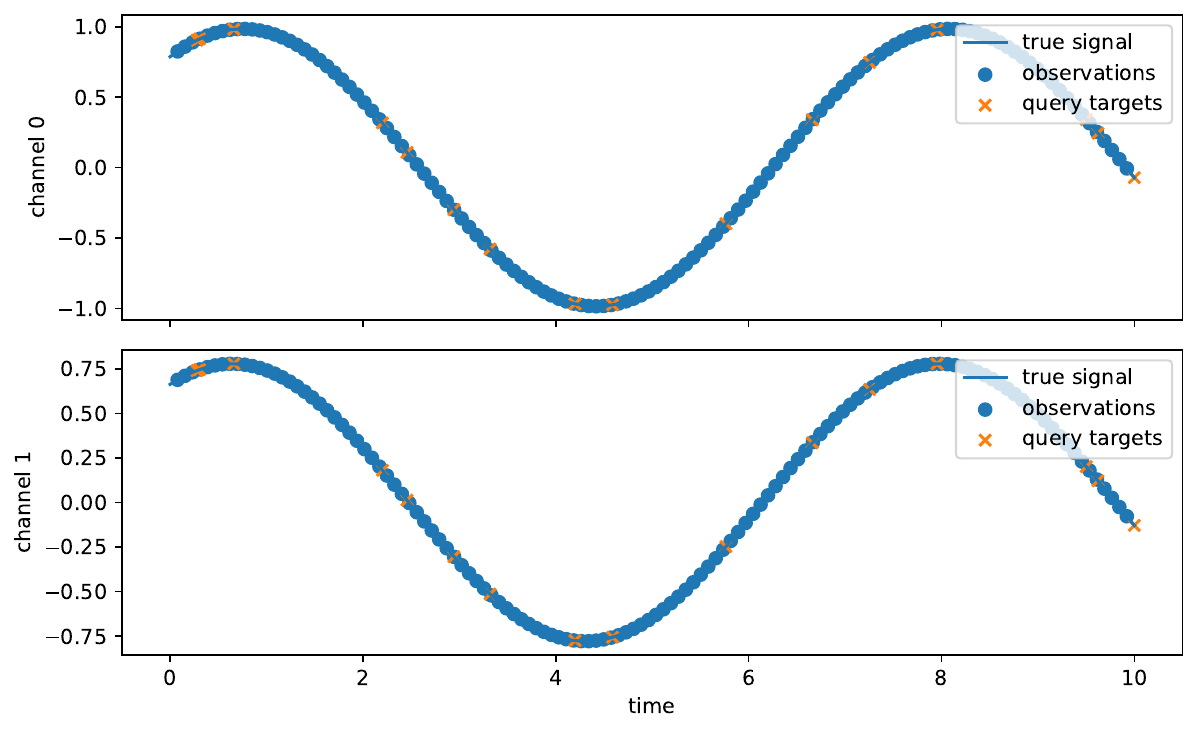}
    \caption{True underlying signal, observation, and query targets.}
    \end{subfigure}
    \begin{subfigure}[t]{\textwidth}
        \centering
        \includegraphics[width=0.8\linewidth]{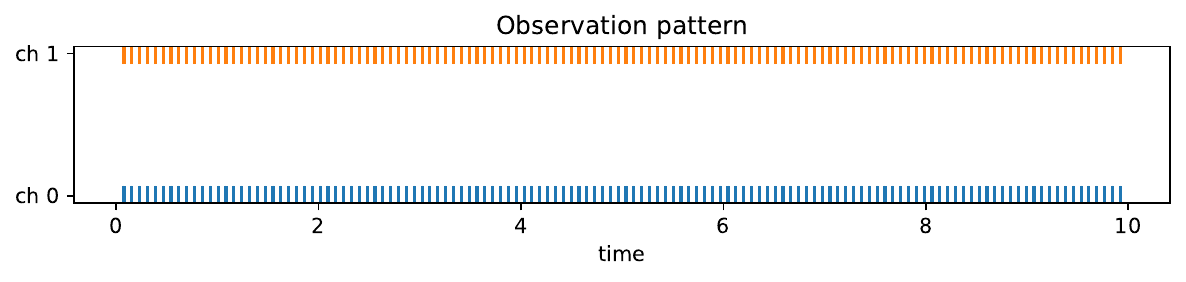}
    \caption{Observation pattern of the two channels.}
    \end{subfigure}
    \caption{\textbf{Observation patterns and signal example for synchronous regular regime.}}
    \label{fig:example_sync_regular}
\end{figure}

\begin{figure}
    \centering
    \begin{subfigure}[t]{\textwidth}
    \centering
        \includegraphics[width=0.8\linewidth]{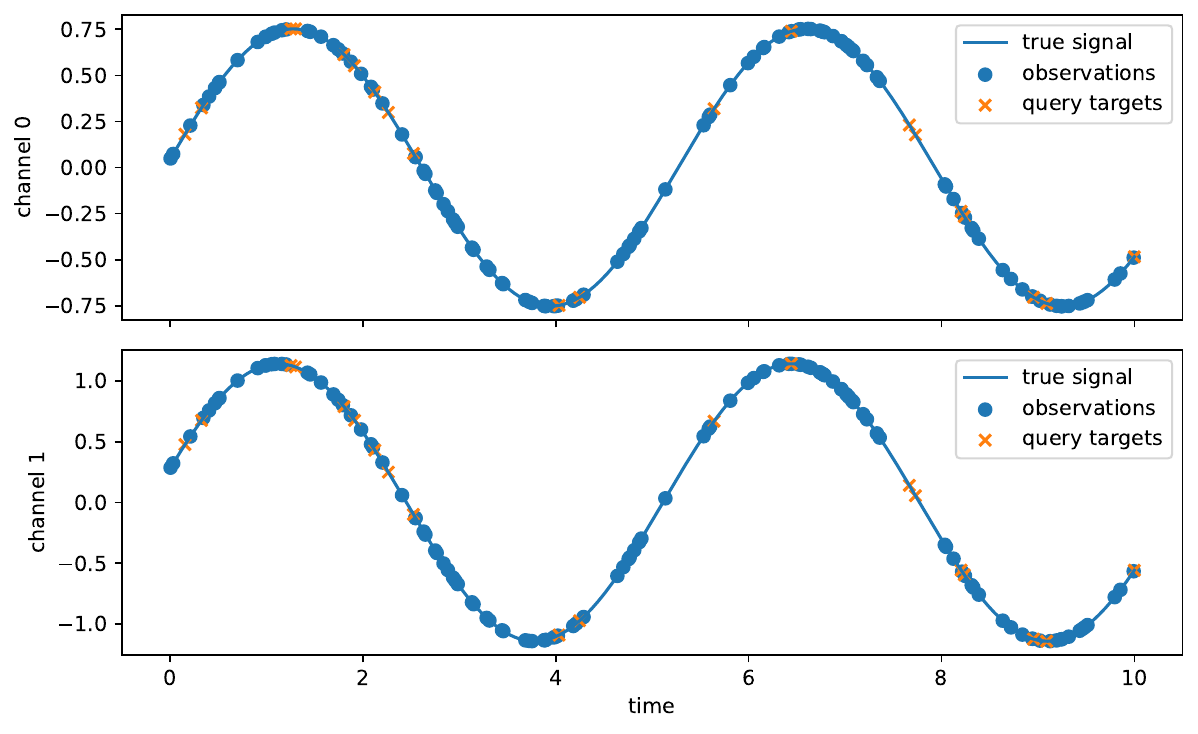}
    \caption{True underlying signal, observation, and query targets.}
    \end{subfigure}
    \begin{subfigure}[t]{\textwidth}
        \centering
        \includegraphics[width=0.8\linewidth]{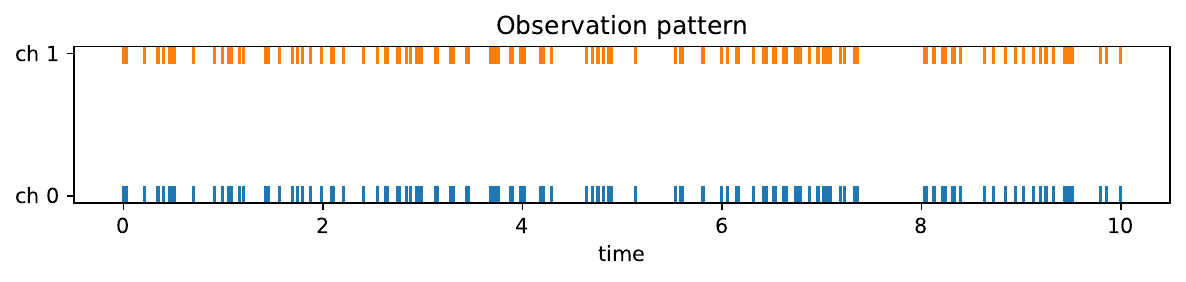}
    \caption{Observation pattern of the two channels.}
    \end{subfigure}
    \caption{\textbf{Observation patterns and signal example for synchronous irregular regime.}}
    \label{fig:example_sync_irregular}
\end{figure}

\begin{figure}
    \centering
    \begin{subfigure}[t]{\textwidth}
        \centering
        \includegraphics[width=0.8\linewidth]{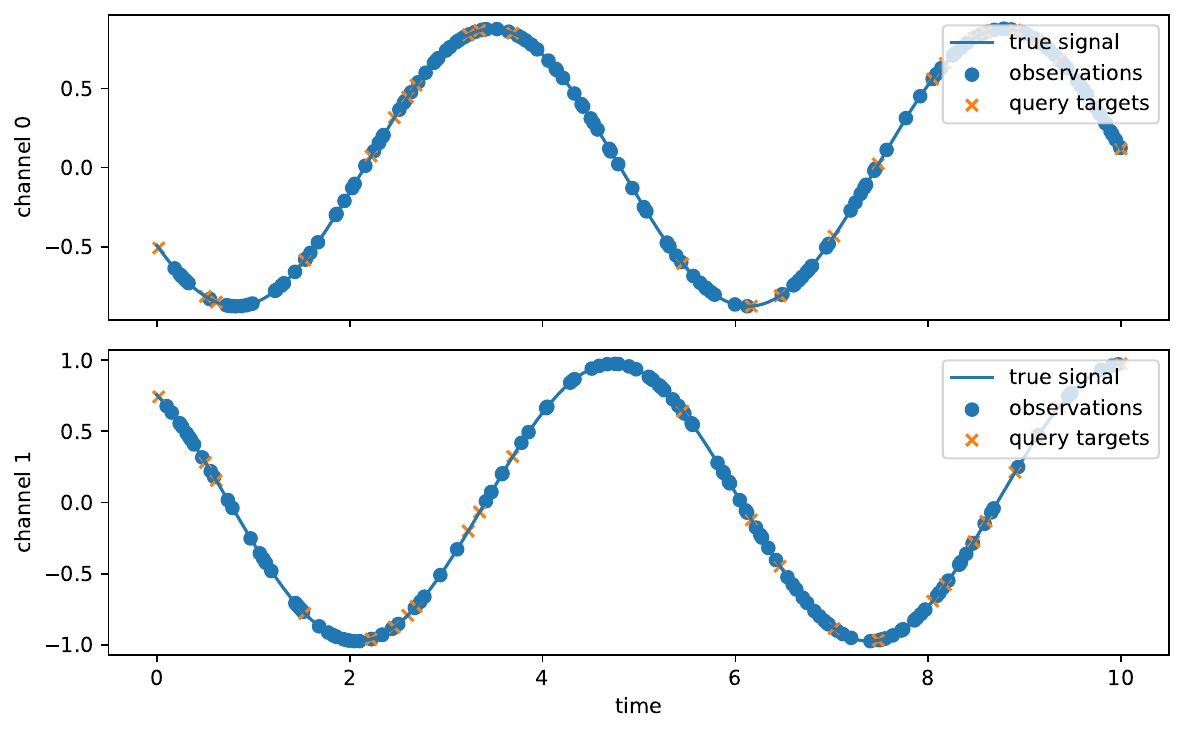}
    \caption{True underlying signal, observation, and query targets.}
    \end{subfigure}
    \begin{subfigure}[t]{\textwidth}
        \centering
        \includegraphics[width=0.8\linewidth]{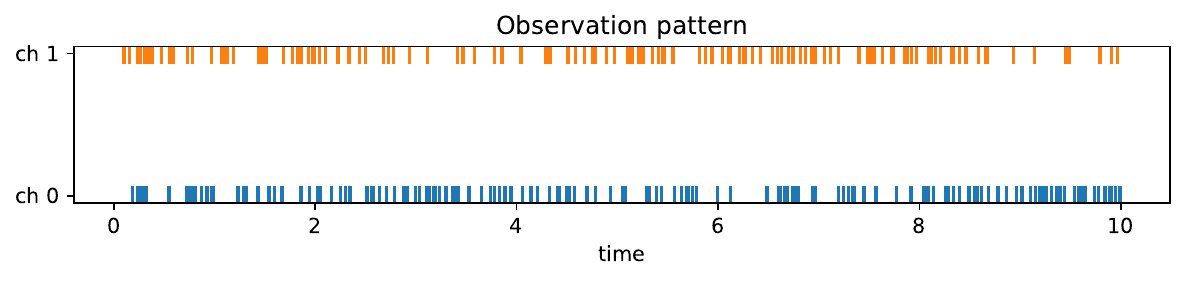}
    \caption{Observation pattern of the two channels.}
    \end{subfigure}
    \caption{\textbf{Observation patterns and signal example for asynchronous irregular regime.}}
    \label{fig:example_async_irregular}
\end{figure}

\begin{figure}
    \centering
    \begin{subfigure}[t]{\textwidth}
        \centering
        \includegraphics[width=0.8\linewidth]{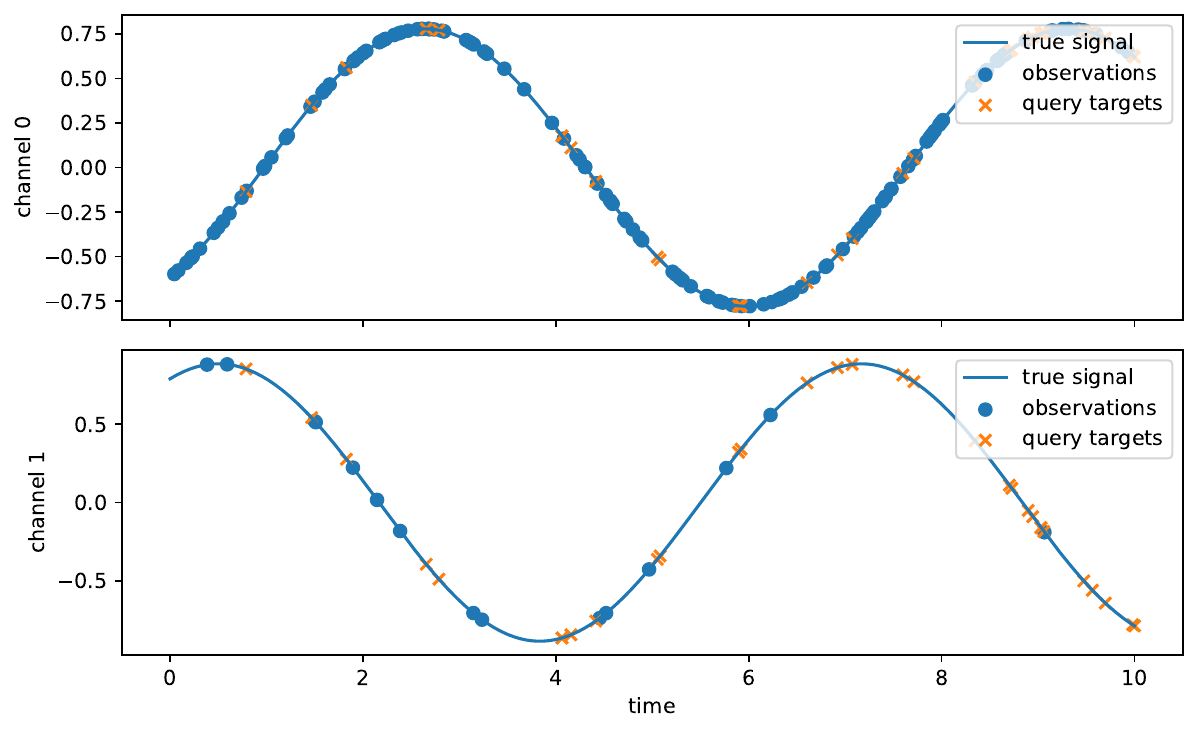}
    \caption{True underlying signal, observation, and query targets.}
    \end{subfigure}
    \begin{subfigure}[t]{\textwidth}
        \centering
        \includegraphics[width=0.8\linewidth]{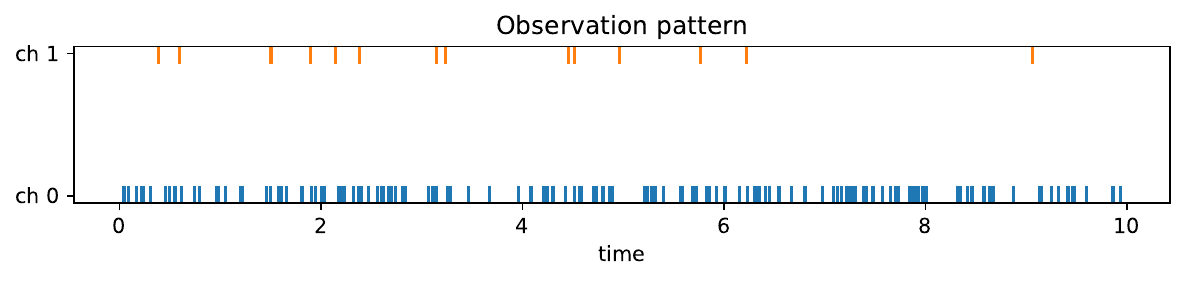}
    \caption{Observation pattern of the two channels.}
    \end{subfigure}
    \caption{\textbf{Observation patterns and signal example for asynchronous sparse regime.}}
    \label{fig:example_async_sparse}
\end{figure}

We convert the observation stream to the increment representation. In this experiment we do not include observation-count coordinates in the model input, since the sampled sinusoid observations almost surely do not contain exact repeated event values. Physical time is appended as a continuously observed channel. Thus the model input has $d+1=3$ channels: two discretely observed value-increment channels and one continuously observed time channel.

For each sequence, we also store the first observed value in each channel. This vector is passed to the input-dependent initialisation of the controlled differential equation after
a $\tanh$ transformation. This gives the model access to the starting observed level of each channel while the subsequent path records increments from one observation to the next.

For every sample, we draw an output query partition independently of the observation times with the number of query intervals $m$ sampled uniformly from $\{16, 17,\ldots,32\}$. We then sample $m-1$ interior query points uniformly from $(0,T)$, sort them, and append the endpoints $0$ and $T$, giving
\[
    0 = q_0 < q_1 < \cdots < q_m = T .
\]
The model receives the log-signature of the observed stream over each interval
$[q_k,q_{k+1})$ and is trained to predict the true endpoint value
\[
    y_k = x(q_{k+1}) \in \mathbb{R}^2 .
\]
Since $q_{k+1}$ is not required to be an observation time, the task is therefore not next-observation prediction. It tests whether the interval summaries allow the model to predict the underlying state on an output grid chosen independently of the input sampling grid.

We train one model on data generated from each observation regime. 
Each training set contains $2048$ samples and each test set contains $512$ samples. We use batch size $32$ and train for $100$ epochs. 
The model is a block diagonal log-SLiCE with hidden dimension $64$, log-signature depth $2$, block size $8$. A linear decoder is applied to the hidden state at each query interval to produce the two-dimensional prediction. 
We optimise the mean squared error over all non-padded query intervals using Adam with learning rate $10^{-3}$ and global gradient clipping at norm $1$.

Because the number of query intervals varies between samples, interval targets are padded within each batch. The loss is computed only on valid intervals. If $\hat{y}_{b,k}$ and
$y_{b,k}$ denote the prediction and target for sample $b$ and interval $k$, and $m_{b,k}\in\{0,1\}$ is the padding mask, the training objective is
\[
    \mathcal{L}
    =
    \frac{
        \sum_{b,k} m_{b,k}
        \left\|\hat{y}_{b,k} - y_{b,k}\right\|_2^2
    }{
        d_{\mathrm{out}} \sum_{b,k} m_{b,k}
    } ,
\]
where $d_{\mathrm{out}}=2$ is the number of predicted channels.

\begin{figure}
    \centering
    \includegraphics[width=\linewidth]{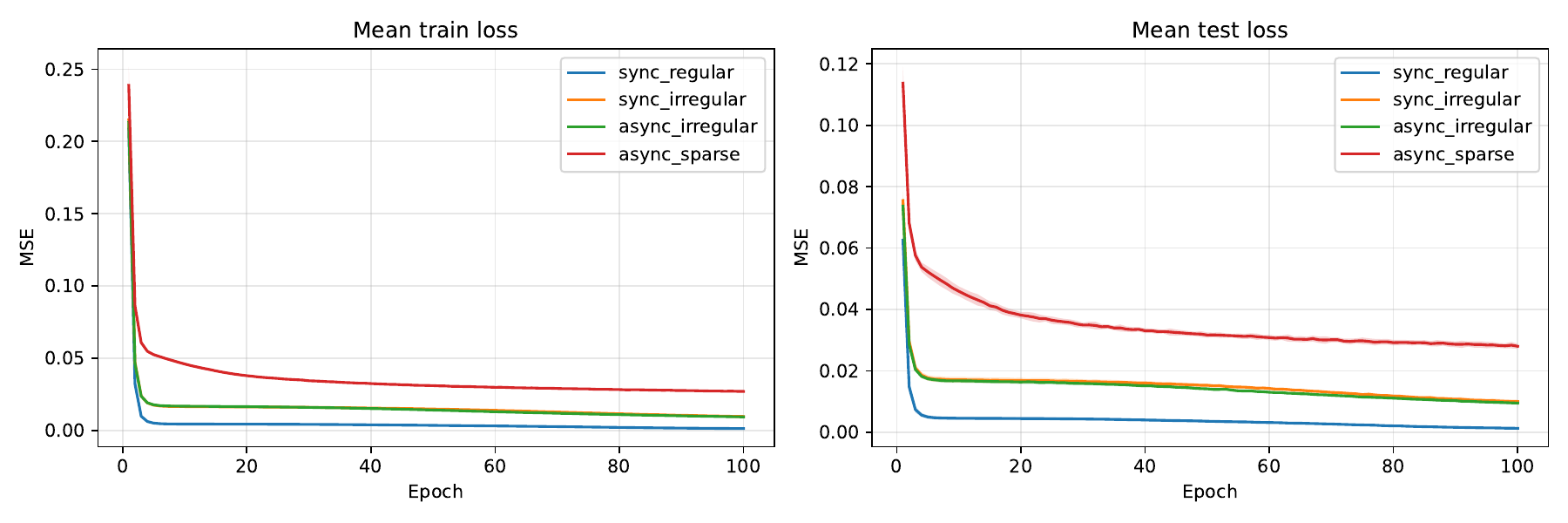}
    \caption{\textbf{Training/test loss over epochs for the synthetic coupled task.}}
    \label{fig:sinusoid_loss_traj}
\end{figure}

After training one model on each regime, we evaluate each trained model on test sets from all four regimes. This gives the cross-regime matrix seen in Figure~\ref{fig:sinusoid}, which reports the mean and standard deviation of the test MSE across five repeats, scaled by $10^{-2}$.

This cross-regime protocol is designed to test sampling-regime robustness. A model trained only on synchronous regular data can exploit the fixed sampling structure seen during training, and therefore achieves very low error on the matching test regime. However, this does not transfer well to irregular,
asynchronous, or sparse observations. In contrast, the asynchronous sparse regime is the hardest training distribution because it exposes the model to asynchronous observations with unequal inter-observation gaps, and severe imbalance in observation frequency. 
Models trained in this regime therefore generalise more robustly to the other regimes, as shown by the more consistent off-diagonal errors in the final row of Figure~\ref{fig:sinusoid}. The similarity between the synchronous irregular and asynchronous irregular rows is consistent with the training
curves in Figure~\ref{fig:sinusoid_loss_traj}. If the input grid is irregular, making the two channels asynchronous
does not substantially change the difficulty of the task for our model.

In Figures~\ref{fig:predictions_sync_regular}--\ref{fig:predictions_async_sparse}, we see example outputs from trained models under different data sampling regimes.
The predictions closely track the sinusoidal targets in most cases. 
In the asynchronous sparse case, note that channel 1 is the sparsely observed channel, hence the larger error for Figure~\ref{fig:predictions_async_sparse}.

\begin{figure}
    \centering
    \includegraphics[width=0.9\linewidth]{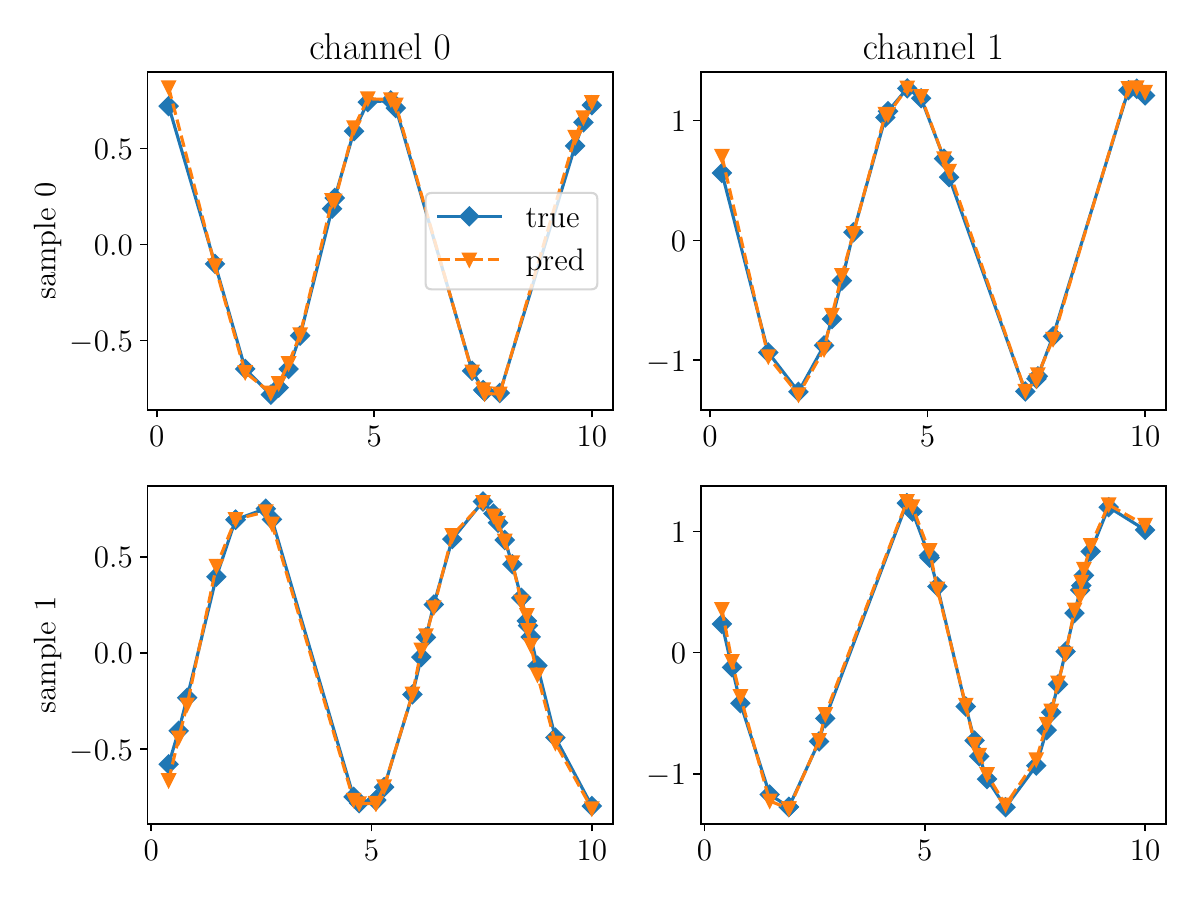}
    \caption{\textbf{Prediction examples for model trained on synchronous regular data.}}
    \label{fig:predictions_sync_regular}
\end{figure}

\begin{figure}
    \centering
    \includegraphics[width=0.9\linewidth]{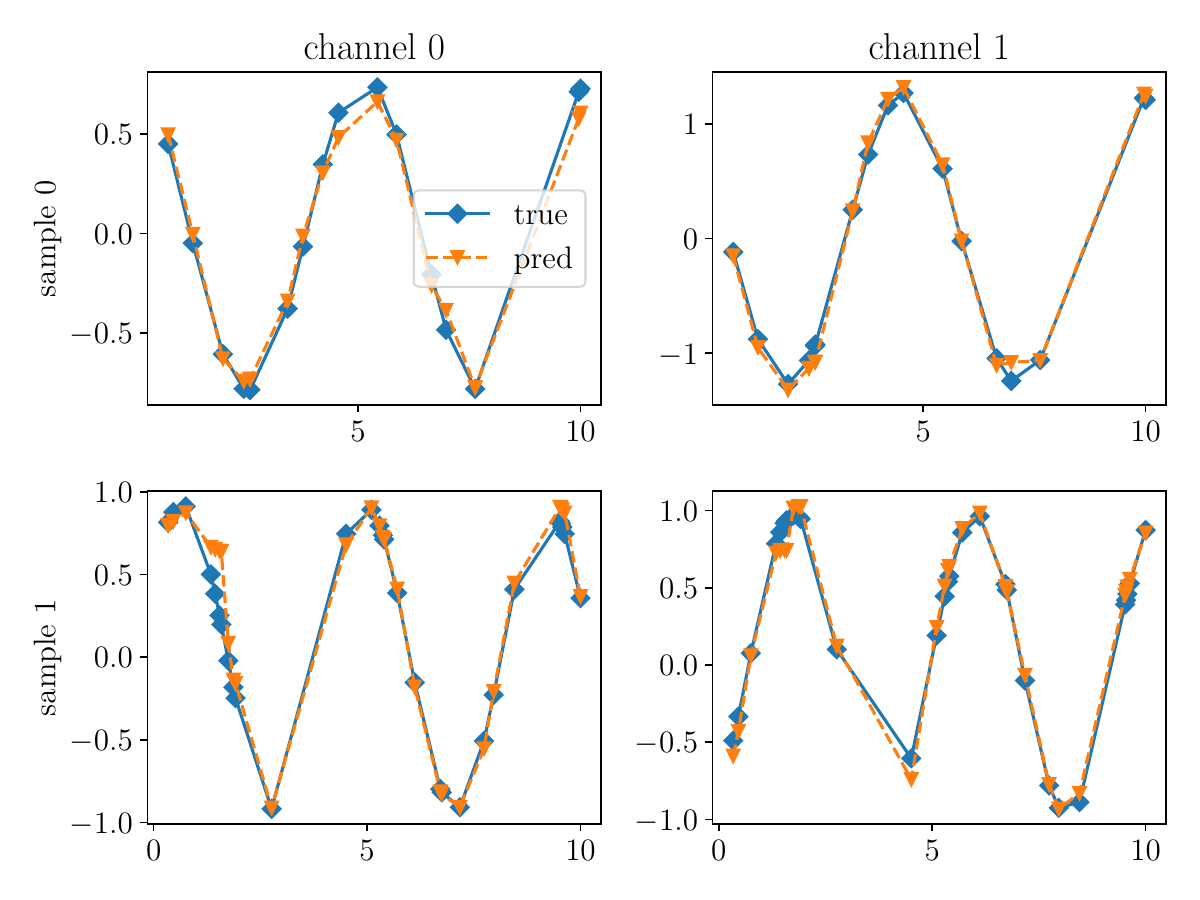}
    \caption{\textbf{Prediction examples for model trained on synchronous irregular data.}}
    \label{fig:predictions_sync_irregular}
\end{figure}

\begin{figure}
    \centering
    \includegraphics[width=0.9\linewidth]{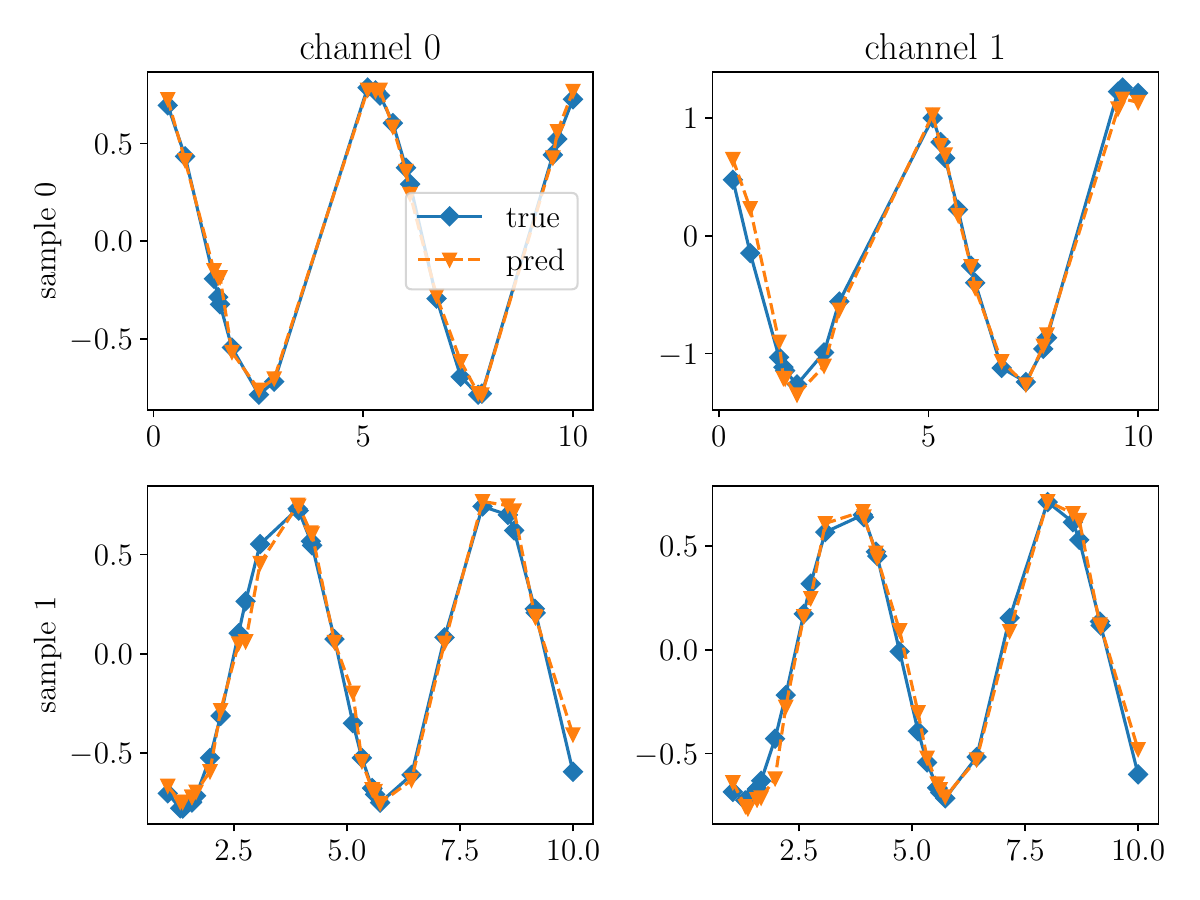}
    \caption{\textbf{Prediction examples for model trained on asynchronous irregular data.}}
    \label{fig:predictions_async_irregular}
\end{figure}

\begin{figure}
    \centering
    \includegraphics[width=0.9\linewidth]{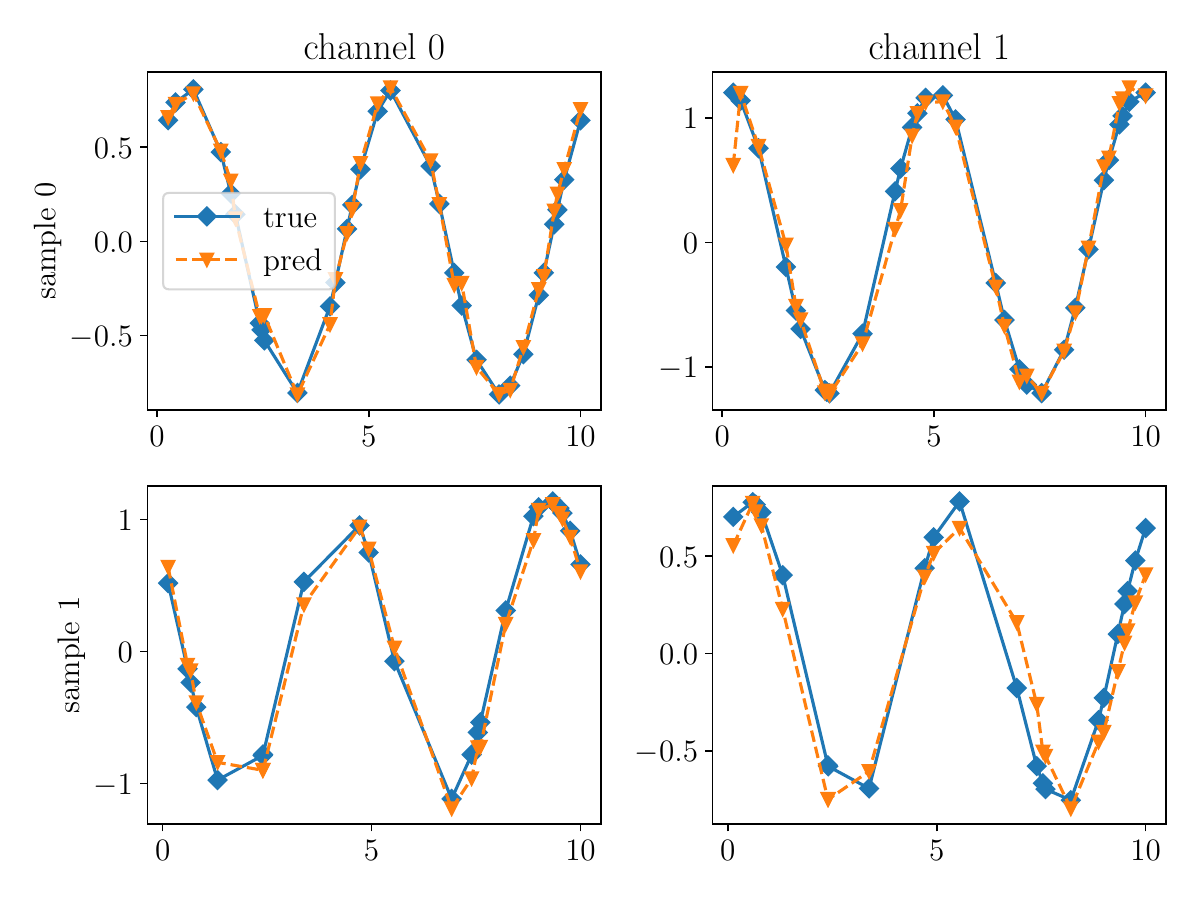}
    \caption{\textbf{Prediction examples for model trained on asynchronous sparse data.}}
    \label{fig:predictions_async_sparse}
\end{figure}

\subsection{Linear system driven by Brownian motion}\label{app:brownian-rde}

This section gives the full experimental details for the Brownian controlled system in Section~\ref{sec:bm_results}. This experiment tests whether our Log-SLiCEs can exploit higher-order information when it is present in the data. The solution of the system considered here depends on second-order Lie bracket information.

We use a precomputed ensemble of $4$-dimensional Brownian rough paths on the interval $[0,1]$. Each path is stored on a fine regular grid with $2048$ time steps. The stored input at each step is a truncated log-signature increment up to depth 2, and so this is of dimension $10$.
That is, each Brownian sample is represented by an array of shape $2048 \times 10$. 

For each Brownian path, we construct targets by simulating a two-dimensional linear controlled system driven by the first two Brownian coordinates, with initial state
\(
    X_0 = (1,0)^\top \in \mathbb{R}^2 .
\)
\[
    \mathrm{d}X_t
    =
    V_1X_t\circ\mathrm{d}W^1_t
    +
    V_2X_t\circ\mathrm{d}W^2_t,
\]

The two degree-one vector fields are fixed linear maps
\[
    V_1
    =
    0.15
    \begin{pmatrix}
        -0.5 & -1.0 \\
         1.0 & -0.5
    \end{pmatrix},
    \qquad
    V_2
    =
    0.15
    \begin{pmatrix}
        -0.2 & 0.8 \\
         0.3 & -0.7
    \end{pmatrix}.
\]
The use of non-commuting matrices here ensures that second-order information cannot be removed without changing the solution.

Although these Brownian streams are four-dimensional, our target system here is driven only by the first two Brownian coordinates. The remaining coordinates are therefore irrelevant for the target dynamics and must be ignored by the model.

We loop over a number of query intervals
\[
    m \in \{2,4,8,16,32,64\}.
\]
The interval endpoints are aligned with the fine Brownian grid. For a given random seed, we sample $m-1$ cut indices uniformly without replacement from the interior grid indices, sort them, and append the endpoints $0$ and $2048$. After rescaling by $2048$, this gives a random partition
\[
    0 = q_0 < q_1 < \cdots < q_m = 1 .
\]
The same partition is used for all samples within a run. The target associated with interval $[q_k,q_{k+1})$ is the simulated state at the right endpoint,
\[
    y_k = X(q_{k+1}) \in \mathbb{R}^2 .
\]
Although the model predicts endpoint values on a query grid that is separate from the fine input grid, the query endpoints are chosen to coincide with fine-grid points to make use of the available second level information.

When only depth-1 information is used, we keep only the first four coordinates, i.e. the Brownian increments themselves. In both cases, physical time is appended as a continuously observed channel. Therefore the model input has $10+1=11$ channels in the depth-2 setting and $4+1=5$ channels in the depth-1 setting.

We use block diagonal Log-SLiCE with hidden dimension 64 and block size 8. A linear decoder maps the hidden state at each query interval to the two-dimensional predicted state. Unlike the sinusoid experiment, we do not use input-dependent initialisation in this experiment.

Each run uses $2048$ training paths and $512$ test paths. We train with batch size $32$ for $40$ epochs using Adam with learning rate $10^{-3}$ and global gradient clipping at norm $1$.
For each value of $m \in \{2,4,8,16,32,64\}$, we repeat the experiment over five random seeds. Like the sinusoid task, the training objective is the masked mean squared error over non-padded intervals.

\begin{table}[t]
\centering
\caption{
\textbf{Test MSE versus number of intervals.}
All values are scaled by $10^{-6}$ and reported as mean $\pm$ standard deviation across 5 seeds.
}
\begin{tabular}{ccc}
\toprule
Intervals & Level 1 & Level 2 \\
\midrule
2  & $6.97 \pm 0.93$ & $0.29 \pm 0.03$ \\
4  & $3.72 \pm 0.86$ & $0.20 \pm 0.06$ \\
8  & $2.10 \pm 0.46$ & $0.15 \pm 0.03$ \\
16 & $0.72 \pm 0.09$ & $0.12 \pm 0.01$ \\
32 & $0.41 \pm 0.04$ & $0.11 \pm 0.01$ \\
64 & $0.28 \pm 0.02$ & $0.11 \pm 0.01$ \\
\bottomrule
\end{tabular}
\label{tab:intervals_vs_error}
\end{table}

Figure~\ref{fig:bm_log} and Table~\ref{tab:intervals_vs_error} report the test MSE as a function of the number of query intervals. Example output paths can be seen in Figures~\ref{fig:bm_results} and \ref{fig:bm_results2}. As expected, the level-2 model achieves substantially lower error, especially
when the number of query intervals is small.
As the number of intervals increases, the gap between the two models decreases. When the partition is refined, each interval becomes shorter, and the effect of unobserved second-level terms on each interval becomes smaller. 
The level-1 model can therefore partially compensate for the lack of second level information by using more frequent updates. 
However, the level-2 model remains more accurate across the interval counts considered, confirming that our approach can incorporate and exploit higher-order information directly when such information is available.

\begin{figure}[t]
    \centering
    \includegraphics[width=0.75\linewidth]{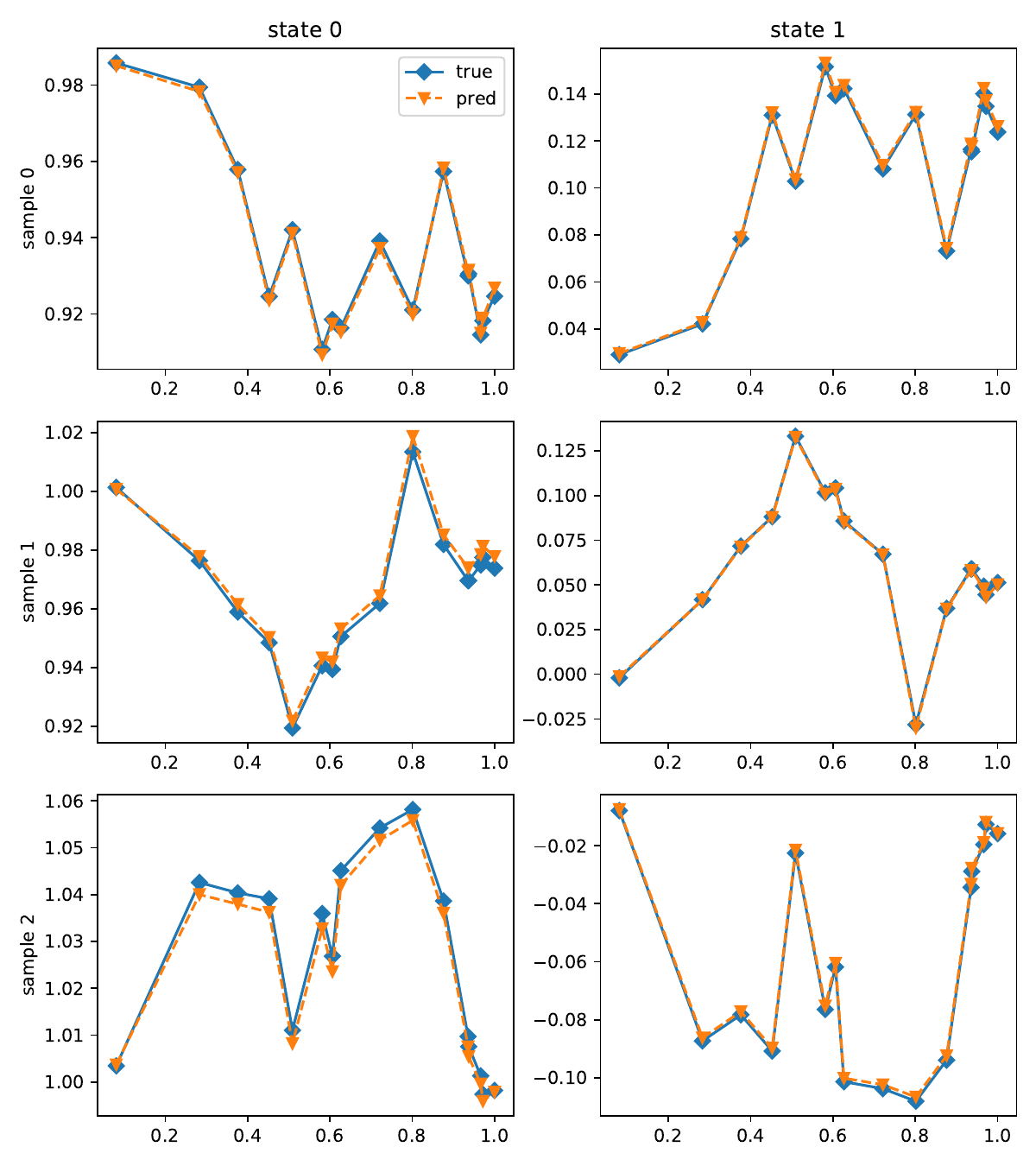}
    \caption{\textbf{Prediction of a linear system driven by Brownian motion using a model trained with only increments (level 1) data. }
    Three examples are taken from the test set with $16$ intervals.}
    \label{fig:bm_results}
\end{figure}

\begin{figure}[t]
    \centering
    \includegraphics[width=0.75\linewidth]{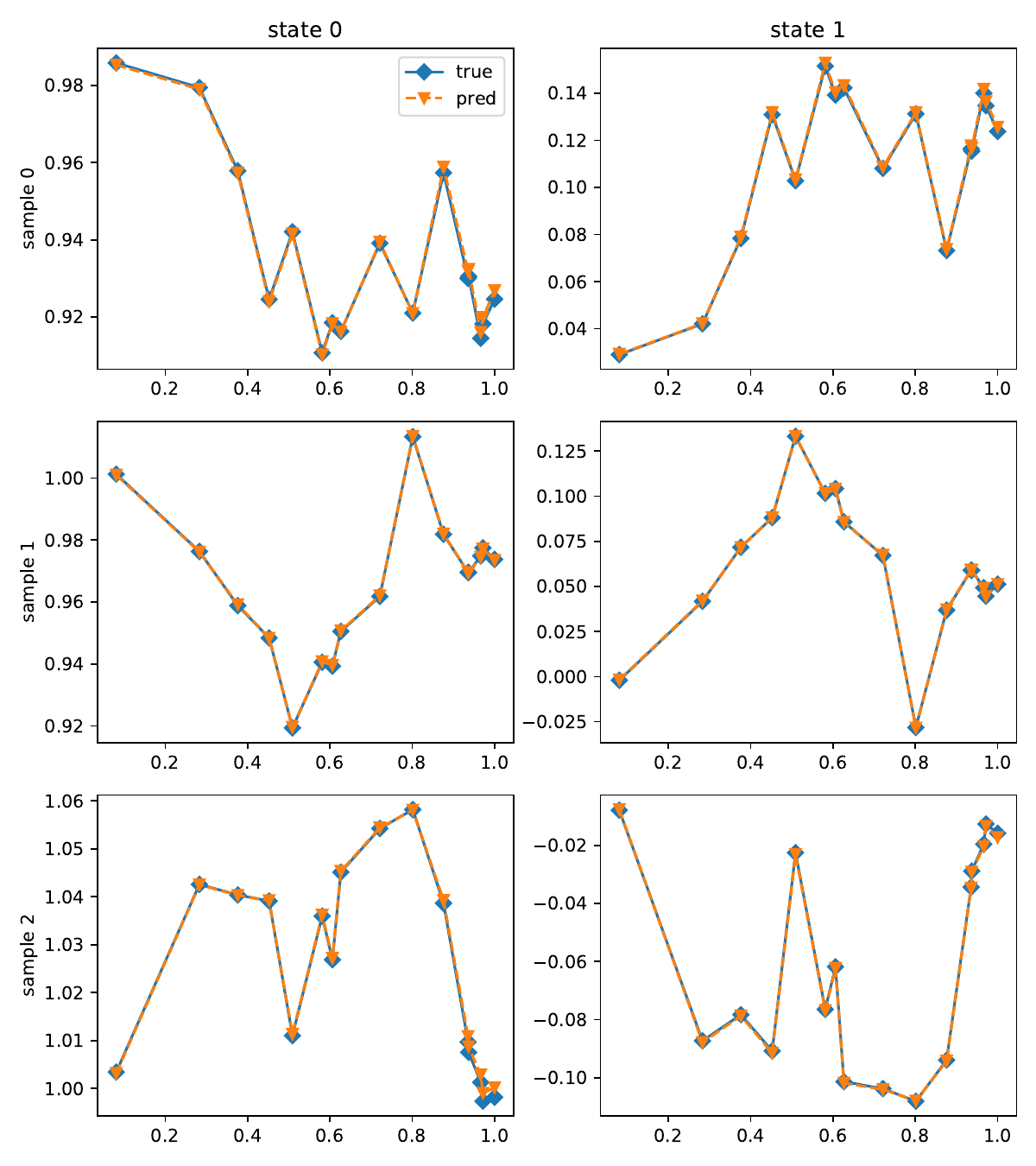}
    \caption{\textbf{Prediction of a linear system driven by Brownian motion using a model trained with both increments and area (level 2) data. }
    Three examples are taken from the test set with $16$ intervals.}
    \label{fig:bm_results2}
\end{figure}

\subsection{UEA experiments}
\label{app:uea-experiments}

We follow the UEA multivariate time series classification archive (UEA-MTSCA) \citep{bagnall2018ueamultivariatetimeseries} protocol of \citet{Walker2024LogNCDE} and the SLiCE protocol of \citet{walker2025structuredlinearcdesmaximally}.
The benchmark consists of EigenWorms, EthanolConcentration, Heartbeat, MotorImagery, SelfRegulationSCP1, and SelfRegulationSCP2.
As in \citet{Walker2024LogNCDE}, duplicated EigenWorms series are removed before splitting.
The original train and test cases are pooled and re-split into train, validation, and test sets with relative sizes $70/15/15$.
Hyperparameters are selected using validation accuracy on a fixed random split and are then kept fixed for evaluation over five random seeds.

The recurrent baselines are S5, LRU, S6, and Mamba \citep{S5, orvieto2023resurrecting, gu2024mamba}.
The continuous baselines are an NCDE with Hermite cubic interpolation and backward differences \citep{kidger2020neuralcde}, and a Log-NCDE, diagonal SLiCE (D-SLiCE), and block-diagonal SLiCE (BD-SLiCE) using our embedding \citep{Walker2024LogNCDE, walker2025structuredlinearcdesmaximally}.
The aim of these experiments is not to establish a new state of the art on the UEA-MTSCA benchmark.
The selected baselines are chosen to match the protocols of \citet{Walker2024LogNCDE} and \citet{walker2025structuredlinearcdesmaximally}, and we keep the selected hyperparameters fixed rather than re-tuning each model for the dropped-input setting.
Recent models, including Rough Transformers \citep{moreno2024rough}, oscillatory state-space models such as LinOSS \citep{rusch2025oscillatory} and D-LinOSS \citep{boyer2025learning}, and nonlinear parallel state-space models such as LrcSSM \citep{farsang2025parallelization}, report higher no-drop accuracies on this six-dataset benchmark.
Our comparison instead focuses on the effect of the proposed embedding and interval-level Log-ODE computation under controlled hyperparameters and increasing input sparsification.

Since the task is classification, time is treated as a discretely observed variable when included in the driving path, and there are no continuously observed variables.
For D-SLiCE and BD-SLiCE, we use the Log-NCDE hyperparameters selected by \citet{Walker2024LogNCDE}, replacing only the non-linear vector field by the corresponding structured linear vector field, following \citet{walker2025structuredlinearcdesmaximally}.
For BD-SLiCE, the block size is $b_i=4$.
For the SLiCE models with the Log-ODE method, flows are composed using an associative parallel scan with chunks of size $128$.
Table~\ref{tab:uea_logncde_hparams} gives the Log-NCDE hyperparameters used by Log-NCDE, D-SLiCE, and BD-SLiCE.
Table~\ref{tab:uea_recurrent_hparams} gives the selected hyperparameters for the recurrent baselines.
All of these hyperparameters are kept fixed across the no-drop and input-dropping experiments.

All reported accuracies are test accuracies in percent, reported as mean $\pm$ standard deviation over five seeds.
Average accuracy is the macro-average over the six datasets.
Average rank is computed by ranking the models separately on each dataset at a fixed dropping level and averaging the ranks over datasets.
No-drop values marked with $^\dagger$ are taken from \citet{Walker2024LogNCDE}, and no-drop values marked with $^\ddagger$ are taken from \citet{walker2025structuredlinearcdesmaximally}.
All dropped-input results are produced by our runs using the same selected hyperparameters, without retuning for the drop level.

\begin{table}[t]
\centering
\small
\setlength{\tabcolsep}{5pt}
\renewcommand{\arraystretch}{1.15}
\caption{\textbf{Log-NCDE hyperparameters for the UEA path models.}
D-SLiCE and BD-SLiCE use the same settings as Log-NCDE, replacing only the vector field structure.}
\label{tab:uea_logncde_hparams}
\begin{tabular}{lcccccc}
\toprule
\multirow{2}{*}{Dataset}
& \multirow{2}{*}{LR}
& \multirow{2}{*}{Include time}
& \multirow{2}{*}{Hidden Dimension}
& \multicolumn{2}{c}{Log-ODE}
& \multirow{2}{*}{$\lambda$} \\
\cmidrule(lr){5-6}
& & & & Depth & Step & \\
\midrule
EW   & $10^{-3}$ & $\checkmark$ & $128$ & $2$ & $12$ & $10^{-3}$ \\
EC   & $10^{-4}$ & $\checkmark$ & $64$  & $1$ & $1$  & $10^{-6}$ \\
HB   & $10^{-3}$ & $\checkmark$ & $16$  & $2$ & $2$  & $10^{-6}$ \\
MI   & $10^{-3}$ & $\times$     & $16$  & $2$ & $16$ & $10^{-3}$ \\
SCP1 & $10^{-4}$ & $\times$     & $64$  & $2$ & $16$ & $0$ \\
SCP2 & $10^{-4}$ & $\times$     & $128$ & $2$ & $4$  & $10^{-3}$ \\
\bottomrule
\end{tabular}
\vspace{0.1cm}
\end{table}

\begin{table*}[t]
\centering
\scriptsize
\setlength{\tabcolsep}{4pt}
\renewcommand{\arraystretch}{1.12}
\caption{\textbf{Selected recurrent-model hyperparameters for the UEA experiments.}
The values are those selected by the hyperparameter optimisation of \citet{Walker2024LogNCDE} and are kept fixed in all dropping experiments.}
\label{tab:uea_recurrent_hparams}
\begin{tabular}{llcccccc}
\toprule
Model & Hyperparameter & EW & EC & HB & MI & SCP1 & SCP2 \\
\midrule
\multirow{5}{*}{LRU}
& LR & $10^{-3}$ & $10^{-5}$ & $10^{-4}$ & $10^{-3}$ & $10^{-3}$ & $10^{-3}$ \\
& Include time & $\times$ & $\checkmark$ & $\checkmark$ & $\times$ & $\times$ & $\checkmark$ \\
& Hidden dimension & $64$ & $64$ & $128$ & $16$ & $64$ & $64$ \\
& Layers & $4$ & $6$ & $2$ & $6$ & $2$ & $2$ \\
& State dimension & $64$ & $16$ & $256$ & $256$ & $16$ & $16$ \\
\midrule
\multirow{6}{*}{S5}
& LR & $10^{-4}$ & $10^{-5}$ & $10^{-3}$ & $10^{-3}$ & $10^{-3}$ & $10^{-4}$ \\
& Include time & $\checkmark$ & $\checkmark$ & $\times$ & $\times$ & $\times$ & $\checkmark$ \\
& Hidden dimension & $64$ & $128$ & $128$ & $16$ & $128$ & $16$ \\
& Layers & $2$ & $2$ & $4$ & $6$ & $6$ & $2$ \\
& State dimension & $16$ & $16$ & $16$ & $64$ & $16$ & $64$ \\
& Init. blocks & $8$ & $8$ & $4$ & $4$ & $8$ & $2$ \\
\midrule
\multirow{5}{*}{S6}
& LR & $10^{-3}$ & $10^{-5}$ & $10^{-3}$ & $10^{-3}$ & $10^{-4}$ & $10^{-3}$ \\
& Include time & $\times$ & $\checkmark$ & $\checkmark$ & $\checkmark$ & $\times$ & $\times$ \\
& Hidden dimension & $16$ & $16$ & $16$ & $16$ & $64$ & $16$ \\
& Layers & $4$ & $4$ & $4$ & $4$ & $2$ & $2$ \\
& State dimension & $64$ & $16$ & $16$ & $256$ & $16$ & $256$ \\
\midrule
\multirow{7}{*}{Mamba}
& LR & $10^{-3}$ & $10^{-3}$ & $10^{-4}$ & $10^{-5}$ & $10^{-5}$ & $10^{-4}$ \\
& Include time & $\checkmark$ & $\checkmark$ & $\times$ & $\times$ & $\times$ & $\checkmark$ \\
& Hidden dimension & $16$ & $64$ & $64$ & $128$ & $128$ & $64$ \\
& Layers & $6$ & $4$ & $4$ & $2$ & $2$ & $6$ \\
& State dimension & $64$ & $256$ & $256$ & $64$ & $16$ & $64$ \\
& Conv. dimension & $2$ & $4$ & $2$ & $3$ & $4$ & $2$ \\
& Expansion factor & $1$ & $4$ & $4$ & $1$ & $1$ & $2$ \\
\bottomrule
\end{tabular}
\end{table*}

\subsubsection{EigenWorms}
\label{app:uea-eigenworms}

EigenWorms is additionally used for the timing and memory comparison in Table~\ref{tab:log_method_comparison_ew} because it is the longest UEA dataset in the benchmark, with $17{,}984$ observations per series.
This makes it a direct stress test of whether the Log-ODE method can reduce the cost of long input streams.
For this experiment, we keep the hyperparameters fixed to the values selected in the UEA protocol above and compare each model family with and without interval-level Log-ODE updates.

For the nonlinear NCDE family, the non-Log-ODE row corresponds to the standard NCDE baseline and the Log-ODE row corresponds to the Log-NCDE.
For the dense linear and block-diagonal linear families, all rows are computed in parallel-in-time using an associative scan.
The non-Log-ODE rows use depth-$1$ Log-ODE updates on intervals aligned with consecutive observations.
This is equivalent to solving the CDE exactly with a linearly interpolated observation path.
The Log-ODE rows instead use the selected coarser interval partition and higher-order interval log-signatures, then compose the resulting linear flows with the same associative-scan mechanism.
Thus the comparison isolates the effect of replacing observation-level updates by interval-level Log-ODE updates, while keeping the architecture family and selected hyperparameters fixed.

Timing and memory are measured on an NVIDIA H100 GPU with batch size $1$.
The reported time is the wall-clock time per training step averaged over $1000$ training steps and the reported memory is peak GPU memory.
The EigenWorms results show that the Log-ODE method is beneficial for all three model families.
In particular, BD-SLiCE with Log-ODE achieves higher accuracy than the standard NCDE baseline while reducing the time per training step from $26.02$ seconds to $0.009$ seconds.

\subsubsection{Input dropping}
\label{app:uea-drop-experiments}

The input-dropping experiments test robustness to sparsification of the observed stream.
For each drop level $p \in \{0.3,0.7,0.95\}$, a random mask is sampled independently for each series and applied to the input before model-specific preprocessing.
All models use the same hyperparameters as in the no-drop setting, with no retuning for the drop level.
The same drop level is used throughout each run, and results are averaged over five seeds.

For path-based models, dropping observations changes the observed stream but should not change the physical time horizon.
We therefore keep the original Log-ODE query partition from the undropped series fixed, and recompute each interval log-signature using the surviving observations whose timestamps fall in that interval.
If no discrete observation remains in an interval, the interval still contributes time, if it is included as a channel in the path.
This preserves the original temporal alignment and avoids shortening or rescaling the path after observations are removed.

For the discrete recurrent models S5, LRU, S6, and Mamba, the randomly retained observations are passed directly to the model in chronological order.
When the selected hyperparameters include time, the timestamp is retained as an input channel after dropping.
When time is not included, the model receives only the retained observed values.

Table~\ref{tab:uea_drop_summary} summarises the results across the six datasets.
Tables~\ref{tab:uea_drop_per_dataset_nodrop}--\ref{tab:uea_drop_per_dataset_drop95} give the corresponding per-dataset results.
The continuous-time models using our embedding remain competitive across all drop levels.
BD-SLiCE is the most stable model under severe sparsification, with its macro-average accuracy changing from $64.0\%$ with no dropping to $62.5\%$ with $95\%$ dropping.

\begin{table*}[t]
  \centering
  \scriptsize
  \setlength{\tabcolsep}{4.5pt}
    \caption{\textbf{Per-dataset test accuracy (mean $\pm$ std over 5 seeds, in \%) for the UEA benchmark at no drop, with macro-average accuracy and average dataset rank.} Lower rank is better.}
  \begin{tabular}{lcccccccc}
  \toprule
  Model & EigenWorms & Ethanol & Heartbeat & MotorImagery & SCP1 & SCP2 & Avg.\ Acc. & Avg.\ Rank \\
  \midrule
  S5 & $81.1 \pm 3.7$ & $24.1 \pm 4.3$ & \underline{$77.7 \pm 5.5$} & $47.7 \pm 5.5$ & $\mathbf{89.9 \pm 4.6}$ & $50.5 \pm 2.6$ & $61.8$ & $4.67$ \\
  LRU & $\mathbf{87.8 \pm 2.8}$ & $21.5 \pm 2.1$ & $\mathbf{78.4 \pm 6.7}$ & $48.4 \pm 5.0$ & $82.6 \pm 3.4$ & $51.2 \pm 3.6$ & $61.7$ & $4.50$ \\
  S6 & $85.0 \pm 16.1$ & $26.4 \pm 6.4$ & $76.5 \pm 8.3$ & $51.3 \pm 4.7$ & $82.8 \pm 2.7$ & $49.8 \pm 9.5$ & $62.0$ & $5.00$ \\
  Mamba & $70.9 \pm 15.8$ & $27.9 \pm 4.5$ & $76.2 \pm 3.8$ & $47.7 \pm 4.5$ & $80.7 \pm 1.4$ & $48.2 \pm 3.9$ & $58.6$ & $6.67$ \\
  NCDE & $75.0 \pm 3.9$ & \underline{$29.9 \pm 6.5$} & $73.9 \pm 2.6$ & $49.5 \pm 2.8$ & $79.8 \pm 5.6$ & $53.0 \pm 2.8$ & $60.2$ & $5.33$ \\
  Log-NCDE & $85.6 \pm 5.1$ & $\mathbf{34.4 \pm 6.4}$ & $75.2 \pm 4.6$ & \underline{$53.7 \pm 5.3$} & $83.1 \pm 2.8$ & \underline{$53.7 \pm 4.1$} & $\mathbf{64.3}$ & \underline{$3.00$} \\
  D-SLiCE & $79.4 \pm 5.7$ & $27.1 \pm 4.6$ & $72.9 \pm 5.0$ & $\mathbf{54.4 \pm 6.3}$ & $83.5 \pm 2.1$ & $53.0 \pm 5.8$ & $61.7$ & $4.50$ \\
  BD-SLiCE & \underline{$86.1 \pm 3.5$} & $28.6 \pm 6.4$ & $77.4 \pm 5.6$ & $53.0 \pm 2.0$ & \underline{$84.9 \pm 1.9$} & $\mathbf{54.0 \pm 7.4}$ & \underline{$64.0$} & $\mathbf{2.33}$ \\
  \bottomrule
  \end{tabular}
  \label{tab:uea_drop_per_dataset_nodrop}
\end{table*}

\begin{table*}[t]
  \centering
  \scriptsize
  \setlength{\tabcolsep}{4.5pt}
  \caption{\textbf{Per-dataset test accuracy (mean $\pm$ std over 5 seeds, in \%) for the UEA benchmark at 30\% drop, with macro-average accuracy and average dataset rank.} Lower rank is better.}
  \begin{tabular}{lcccccccc}
  \toprule
  Model & EigenWorms & Ethanol & Heartbeat & MotorImagery & SCP1 & SCP2 & Avg.\ Acc. & Avg.\ Rank \\
  \midrule
  S5 & $79.4 \pm 4.2$ & $26.6 \pm 1.1$ & \underline{$74.8 \pm 2.4$} & $\mathbf{58.9 \pm 2.9}$ & $\mathbf{88.7 \pm 1.6}$ & $51.9 \pm 4.4$ & $63.4$ & $3.25$ \\
  LRU & $78.9 \pm 6.2$ & $25.3 \pm 4.7$ & $70.3 \pm 4.6$ & $56.8 \pm 5.6$ & $82.6 \pm 3.1$ & $51.2 \pm 1.7$ & $60.9$ & $5.58$ \\
  S6 & $80.0 \pm 10.5$ & $25.9 \pm 6.7$ & $74.0 \pm 7.5$ & $42.4 \pm 4.0$ & $78.0 \pm 5.2$ & $50.1 \pm 8.4$ & $58.4$ & $6.50$ \\
  Mamba & $65.3 \pm 15.8$ & $27.2 \pm 1.8$ & $74.8 \pm 6.2$ & $51.3 \pm 2.5$ & $80.4 \pm 1.9$ & $\mathbf{57.7 \pm 4.6}$ & $59.5$ & $4.83$ \\
  NCDE & $77.8 \pm 4.6$ & $28.9 \pm 7.0$ & $67.1 \pm 4.3$ & $51.2 \pm 5.6$ & $80.5 \pm 6.5$ & $51.9 \pm 3.8$ & $59.6$ & $5.92$ \\
  Log-NCDE & \underline{$82.8 \pm 7.3$} & $\mathbf{37.2 \pm 3.6}$ & $74.2 \pm 4.6$ & $54.7 \pm 6.2$ & $82.6 \pm 2.2$ & $53.7 \pm 6.7$ & $\mathbf{64.2}$ & \underline{$3.08$} \\
  D-SLiCE & $76.7 \pm 5.4$ & $27.3 \pm 4.8$ & $\mathbf{77.4 \pm 6.5}$ & $51.2 \pm 3.9$ & $84.0 \pm 1.8$ & \underline{$53.7 \pm 5.3$} & $61.7$ & $3.83$ \\
  BD-SLiCE & $\mathbf{84.4 \pm 4.2}$ & \underline{$32.7 \pm 4.6$} & $69.0 \pm 10.0$ & \underline{$57.9 \pm 3.1$} & \underline{$85.4 \pm 2.8$} & $53.0 \pm 3.4$ & \underline{$63.7$} & $\mathbf{3.00}$ \\
  \bottomrule
  \end{tabular}
  \label{tab:uea_drop_per_dataset_drop30}
\end{table*}

\begin{table*}[t]
  \centering
  \scriptsize
  \setlength{\tabcolsep}{4.5pt}
    \caption{\textbf{Per-dataset test accuracy (mean $\pm$ std over 5 seeds, in \%) for the UEA benchmark at 70\% drop, with macro-average accuracy and average dataset rank.} Lower rank is better.}
  \begin{tabular}{lcccccccc}
  \toprule
  Model & EigenWorms & Ethanol & Heartbeat & MotorImagery & SCP1 & SCP2 & Avg.\ Acc. & Avg.\ Rank \\
  \midrule
  S5 & $72.8 \pm 3.2$ & $27.3 \pm 1.3$ & $70.3 \pm 4.3$ & $\mathbf{56.5 \pm 6.4}$ & \underline{$85.4 \pm 2.2$} & $48.4 \pm 6.2$ & $60.1$ & $4.83$ \\
  LRU & $81.1 \pm 6.4$ & $25.3 \pm 4.7$ & $75.5 \pm 6.6$ & \underline{$54.4 \pm 4.6$} & $78.4 \pm 3.8$ & $51.2 \pm 5.9$ & $61.0$ & $5.00$ \\
  S6 & \underline{$84.7 \pm 6.3$} & $25.3 \pm 7.4$ & $76.1 \pm 4.1$ & $51.3 \pm 2.2$ & $79.2 \pm 1.9$ & $51.7 \pm 3.0$ & $61.4$ & $4.00$ \\
  Mamba & $69.1 \pm 13.3$ & $27.2 \pm 1.8$ & $\mathbf{76.7 \pm 5.8}$ & $53.0 \pm 2.2$ & $79.2 \pm 4.8$ & $\mathbf{58.4 \pm 3.2}$ & $60.6$ & $4.00$ \\
  NCDE & $78.9 \pm 2.8$ & $24.1 \pm 6.2$ & $69.7 \pm 5.9$ & $50.5 \pm 2.3$ & $80.2 \pm 5.8$ & $49.8 \pm 5.9$ & $58.9$ & $6.42$ \\
  Log-NCDE & $\mathbf{87.2 \pm 5.4}$ & $\mathbf{36.2 \pm 5.9}$ & $73.5 \pm 4.4$ & $50.2 \pm 5.5$ & $78.8 \pm 7.1$ & $50.5 \pm 7.6$ & $\mathbf{62.7}$ & $4.67$ \\
  D-SLiCE & $80.6 \pm 7.2$ & $29.4 \pm 8.2$ & \underline{$76.5 \pm 5.3$} & $50.2 \pm 4.1$ & $84.5 \pm 3.7$ & $51.9 \pm 2.4$ & $62.2$ & \underline{$3.83$} \\
  BD-SLiCE & $81.7 \pm 6.2$ & \underline{$31.9 \pm 5.1$} & $72.6 \pm 5.8$ & $50.5 \pm 3.2$ & $\mathbf{85.9 \pm 3.0}$ & \underline{$52.3 \pm 5.0$} & \underline{$62.5$} & $\mathbf{3.25}$ \\
  \bottomrule
  \end{tabular}
  \label{tab:uea_drop_per_dataset_drop70}
\end{table*}

\begin{table*}[t]
  \centering
  \scriptsize
  \setlength{\tabcolsep}{4.5pt}
    \caption{\textbf{Per-dataset test accuracy (mean $\pm$ std over 5 seeds, in \%) for the UEA benchmark at 95\% drop, with macro-average accuracy and average dataset rank.} Lower rank is better.}
  \begin{tabular}{lcccccccc}
  \toprule
  Model & EigenWorms & Ethanol & Heartbeat & MotorImagery & SCP1 & SCP2 & Avg.\ Acc. & Avg.\ Rank \\
  \midrule
  S5 & $61.1 \pm 3.5$ & $27.3 \pm 1.3$ & $70.3 \pm 6.3$ & $\mathbf{53.7 \pm 5.7}$ & $81.9 \pm 4.4$ & $45.6 \pm 6.1$ & $56.7$ & $5.33$ \\
  LRU & $\mathbf{83.9 \pm 2.7}$ & $25.1 \pm 4.8$ & $71.6 \pm 4.6$ & $46.7 \pm 6.5$ & $81.6 \pm 4.5$ & $\mathbf{56.1 \pm 8.4}$ & $60.8$ & \underline{$4.17$} \\
  S6 & $78.8 \pm 9.7$ & $25.7 \pm 6.9$ & $71.5 \pm 4.1$ & $50.9 \pm 4.2$ & $74.9 \pm 3.7$ & $51.7 \pm 3.2$ & $58.9$ & $5.17$ \\
  Mamba & $72.2 \pm 12.2$ & $27.2 \pm 1.8$ & $71.0 \pm 8.2$ & $46.6 \pm 5.0$ & $78.0 \pm 1.8$ & $50.5 \pm 5.0$ & $57.6$ & $6.17$ \\
  NCDE & $77.2 \pm 8.3$ & $28.1 \pm 2.5$ & $68.7 \pm 4.3$ & $50.2 \pm 5.2$ & $76.9 \pm 6.6$ & $50.9 \pm 3.5$ & $58.7$ & $5.33$ \\
  Log-NCDE & $81.7 \pm 3.8$ & $\mathbf{34.9 \pm 4.7}$ & $73.5 \pm 6.8$ & $49.8 \pm 4.5$ & $76.0 \pm 4.7$ & $49.5 \pm 5.7$ & $60.9$ & \underline{$4.17$} \\
  D-SLiCE & $80.6 \pm 5.3$ & \underline{$30.6 \pm 7.1$} & $\mathbf{75.2 \pm 4.8}$ & $49.1 \pm 4.6$ & \underline{$84.5 \pm 2.6$} & \underline{$53.0 \pm 3.6$} & \underline{$62.2$} & $\mathbf{2.83}$ \\
  BD-SLiCE & \underline{$83.3 \pm 3.5$} & $30.1 \pm 9.3$ & \underline{$74.2 \pm 7.8$} & \underline{$53.0 \pm 1.7$} & $\mathbf{85.2 \pm 2.6}$ & $49.5 \pm 5.1$ & $\mathbf{62.5}$ & $\mathbf{2.83}$ \\
  \bottomrule
  \end{tabular}
  \label{tab:uea_drop_per_dataset_drop95}
\end{table*}


\end{document}